\documentclass[11pt]{article}

\usepackage[preprint]{acl}

\usepackage{times}
\usepackage{latexsym}

\usepackage[T1]{fontenc}

\usepackage[utf8]{inputenc}

\usepackage{microtype}

\usepackage{inconsolata}

\usepackage{graphicx}
\usepackage[utf8]{inputenc} 
\usepackage[T1]{fontenc}    
\usepackage{hyperref}       
\usepackage{url}            
\usepackage{booktabs}       
\usepackage{amsfonts}       
\usepackage{nicefrac}       
\usepackage{microtype}      
\usepackage{xcolor}         

\usepackage{amsmath}
\usepackage{graphicx}
\usepackage{array}
\usepackage{amsthm}

\newtheorem{proposition}{Proposition}

\newtheorem{theorem}{Theorem}
\newtheorem{corollary}{Corollary}

\title{Know Thy Reasoner: Not All Language Models Explore Alike}

\author{Moulik Choraria \thanks{Correspondence: moulikc2@illinois.edu; Initial formulation was done during MC's internship at Capital One} \\
UIUC\\
\And
Argyrios Gerogiannis \\
UIUC \\
\And
Anirban Das \\
Capital One\\
\And
Supriyo Chakraborty \\
Capital One\\
\AND
Sourya Basu \\
Capital One \\
\And
Sambit Sahu\\
Capital One\\
\And
Lav R. Varshney\\
Stony Brook University\\
}

\begin{document}
\maketitle
\begin{abstract}
Compute scaling for LLM reasoning trades off exploring solution approaches
(\emph{breadth}) against refining promising ones (\emph{depth}), yet why a given
trade-off works, and why it often fails to transfer across models, remains
unclear. We argue that \textbf{the optimal strategy depends on the model's
\emph{diversity profile}, the spread of probability mass across solution
approaches, and that this must be characterized before any exploration strategy
is adopted.} We formalize this with a framework decomposing reasoning
uncertainty, deriving when depth-based refinement outperforms parallel sampling,
and validate it across three model families at both inference and training. Our
central finding is that the diversity regime dictates the strategy: low-diversity aligned
models benefit from depth-based refinement with lightweight intrinsic signals,
whereas high-diversity base models are often harmed by it, and instead need breadth or stronger signals to compensate.

\end{abstract}

\section{Introduction}

The landmark success of DeepSeek-R1~\citep{Guo_et_al_deepseekr1_2025} in exploiting exploration for open-ended math reasoning has spurred a wave of follow-ups proposing alternate strategies for training-time exploration beyond basic
parallel sampling~\citep{xuSFK2025, zhuangZGHLSZ2025, yaoHZDXJZ2025}. Many of these rely on \emph{intrinsic} signals computed from the model's own
generations, such as token entropy~\citep{houHLLTD2025}, self-certainty or output confidence~\citep{zhaoKFLS25, yoonYJEDLJY2025}, and
self-critique~\citep{madaanTGHG_et_al_2023}. Such intrinsic signals are appealing because they require no additional supervision or auxiliary model: external verifiers~\cite{brown_verifiers_2025} and process reward models~\cite{zhang-etal-2025-lessons, Awesome-Process-Reward-Models} can be expensive to train and increase latency, be difficult to specify outside narrow domains, and prone to reward hacking, whereas intrinsic signals are available from the policy at no extra cost. 

This appeal has produced a steady stream of proposed signals, often evaluated
via a common template: propose a signal with some intuitive motivation,
evaluate it on one or two chosen models, and compare against a parallel-sampling
baseline such as GRPO or DAPO~\citep{shaoWZXS_et_al2024, yuZZYZYD_et_al_2025dapo}.
Yet despite the empirical gains, evaluation remains narrow, and it is unclear
why a given strategy helps on one model yet fails on another. The same
exploration strategy that improves performance on one checkpoint can leave
another unchanged or even degrade it (see Table~\ref{tab:pass_at_n}). We argue
that this inconsistency is not noise but a symptom: the effectiveness of an
exploration strategy depends on the model's \emph{diversity profile}, the spread
of probability mass across distinct solution approaches, and this regime must be
characterized before any strategy is adopted. Concretely, we ask how far
intrinsic, training-free signals can take exploration, and show that the answer
depends on the diversity regime of the model (with respect to the data/task). This reframes choices that current work leaves implicit, such as starting from base versus aligned checkpoints, as consequences
of where a model sits in this regime rather than arbitrary design decisions. To this end, we make the following contributions:
\begin{enumerate}
  \item We develop a \textbf{theoretical framework} (\S\ref{sec:theory}) that
        decomposes reasoning uncertainty and derives conditions, as a function
        of a model's diversity profile, under which depth-based refinement
        outperforms parallel sampling.
  \item We \textbf{validate these conditions at inference}
        (\S\ref{sec:experiments}) across three model families, showing that the
        predicted regime boundaries hold and that existing approaches lose
        effectiveness outside their regime. We further \textbf{measure the
        diversity profile directly} via lexical and semantic metrics, confirming
        the regimes are real rather than only inferred from performance.
  \item We show this behavior \textbf{persists during training}
        (\S\ref{sec:training}), where diversity again governs which strategies
        are effective.
\end{enumerate}
While diversity profile may not be the sole factor, since dataset choice, prompt design, model scale and something as trivial as sampling parameters can all confound the picture, as we discuss in \S\ref{sec:discussion}, these results offer strong support for our argument that the target model's diversity regime must be characterized before any exploration strategy is adopted for a given dataset.


\section{Related Work}
\label{sec:related}

\paragraph{Exploration for reasoning.}
Scaling inference-time compute through sampling is a well-established route to
stronger reasoning~\citep{weiWSBIXCLZ2023, shaoWZXS_et_al2024, snellLXK2024,
welleckBFSXNKH2024}. The simplest approach samples $N$ trajectories in parallel
and selects among them~\citep{cobbeKBCJKPTHNHS2021, wangWSLCNCZ2023}, while
tree-search methods expand and refine trajectories through depth-based
exploration~\citep{ZhaoLH2023, zhangZHYDDT2024}. These define a
breadth-versus-depth axis~\citep{mohrNR2025} that we formalize in \S\ref{sec:theory}. A separate
line of work studies exploration during training~\citep{Guo_et_al_deepseekr1_2025, xuSFK2025, zhuangZGHLSZ2025,
yaoHZDXJZ2025}, typically against parallel-sampling baselines such as GRPO and
DAPO~\citep{shaoWZXS_et_al2024, yuZZYZYD_et_al_2025dapo}.

\paragraph{Intrinsic and external signals.}
Exploration strategies differ in the feedback they use. External signals come
from trained verifiers or process reward models that provide dense, per-step
supervision~\citep{ghimire2026prism,zhangZHYDDT2024, uesatoKKSSW_et_al2022}. Intrinsic signals
are computed from the model's own outputs and require no additional training:
examples include output confidence or self-certainty~\citep{zhaoKFLS25, yoonYJEDLJY2025},
answer-probability rewards~\citep{yuJWYWCY_et_al2025}, consistency across
samples~\citep{zhangYLWLYST2025}, count-based novelty~\citep{zhangLZLQLSTQQ2025}, and self-refinement from
model-generated critiques~\citep{madaanTGHG_et_al_2023}. Several methods add
diversity-promoting intrinsic rewards to counteract the exploration loss induced
by reward optimization~\citep{huZLYHCQCCW2026, sunCWSWW2025}. Intrinsic signals are attractive because they require no auxiliary model and are available directly from the policy. We limit ourselves to the same regime, to isolate how far such signals can carry exploration.

\paragraph{Diversity and uncertainty in generation.}
A growing body of work shows that alignment reduces output diversity. RLHF
improves out-of-distribution generalization but measurably narrows the range of
generated outputs relative to supervised fine-tuning~\citep{kirkMNLHGR2024}, and similar diversity loss has been documented for conceptual~\citep{murthyUH2024} and preference-based~\citep{slocumSM2025} settings, often attributed to the KL regularizer in preference learning~\citep{slocumSM2025}. In the reasoning setting specifically, \citet{yueCLZWYSH2025} show that RLVR concentrates the output distribution toward high-reward paths, improving pass@$1$ while leaving base models with comparable
or higher pass@$k$, indicating that RL narrows rather than expands the reachable
set of approaches. \textbf{This base-versus-aligned contrast is the empirical basis for
the diversity regimes we study}. To characterize uncertainty in reasoning, we
follow the decomposition of \citet{bakmanKHYB_et_al_2025}, separating sources of
uncertainty rather than treating it as a scalar. We measure response-level
diversity directly using the Vendi score~\citep{friedman2023vendiscore} and
dense embeddings~\citep{chen2024bgem3}, reported in \S\ref{sec:experiments}.

\section{Theory}
\label{sec:theory}

\subsection{A Budget Allocation Perspective}

To understand the role of diversity, we first delve into what lies at the core of exploration: the trade-off between \emph{breadth} and
\emph{depth}. For a fixed model and compute/token budget, how should a
strategy divide effort between the two? At one extreme, standard parallel
sampling allocates all compute to breadth; at the other, depth-first refinement resembles tree search which requires a signal (for our purposes, intrinsic) to refine against. Model diversity is what ties these together: how a model responds to these signals is governed by its diversity profile, and, this profile shapes the optimal exploration strategy. We make this precise by first decomposing reasoning uncertainty~\citet{bakmanKHYB_et_al_2025}, and then formalizing the breadth-depth trade-off through an Monte Carlo Tree Search (MCTS)~\citep{wiechowskiGSM2022} abstraction.

\textbf{Setup}: To make the breadth-depth distinction precise we need a formal notion of a solution \emph{approach}, since breadth is exploration \emph{across} approaches and depth is refinement \emph{within} one. Let $q$ denote the problem of interest; in what follows we condition on $q$ throughout. A model parametrized
by $\phi$ generates reasoning trajectories $\tau \sim p_\phi(\cdot)$, and we
write $\mathcal{T}$ for the set of all such trajectories. We assume $\mathcal{T}$
admits a \emph{partition} into $m$ disjoint measurable sets $\mathcal{T} =
\biguplus_{i=1}^m \mathcal{A}_i$, where each $\mathcal{A}_i$ is a \emph{latent
approach}, a class of high-level solution strategies such as analytical or
induction-based reasoning. For instance, on a problem
asking for a closed form of a finite sum, one approach $\mathcal{A}_i$ might
collect all trajectories that proceed by induction on $n$, another all
trajectories that use a generating-function argument, and a third those that
telescope the sum directly; each $\mathcal{A}_i$ groups trajectories that share
a high-level strategy while differing in their detailed execution. To characterize the probability of finding a correct
trajectory, we define:
\begin{align}
\pi_i(\phi) &= \mathbb{P}(\tau\in\mathcal{A}_i|\tau\sim p_{\phi}(\cdot)) \nonumber\\ 
&\quad \text{\small(probability that sample belongs to an approach)} \nonumber\\
\theta_i(\phi, B) &= \Pr(\text{correct} \mid \tau \in \mathcal{A}_i;\, \text{budget }B) \nonumber\\
&\quad \text{\small(model success rate under an approach)} \nonumber\\
\bar{\theta}(\phi, B) &= \textstyle\sum_{i} \pi_i \, \theta_i \nonumber\\
&\quad \text{\small(marginal model success rate)} \nonumber
\end{align}

Next, we define the the set of \emph{viable} approaches $\mathcal{V} = \{i : \exists\ \tau \in \mathcal{A}_i, \tau \text{ is correct} \}$ i.e.\ approaches that contain at least one correct trajectory. We also define an ideal model $\phi^*$, with the intuition being that it allocates masses to different approaches optimally via $\pi(\phi^*)$. We can now characterize the difficulty of the problem $q$ by decomposing the reasoning uncertainty into three components.\\
(i) \textit{Aleatoric uncertainty:} Let $p_i = \Pr(i \in \mathcal{V})$ be the probability that approach $i$ is viable for $q$. We define
$U_{\text{alea}}(q) = \sum_{i=1}^m H\bigl(\mathrm{Bern}(p_i)\bigr),$
the entropy of identifying the viable approaches among all $m$. It is maximized when each approach is equally viable or not (maximal ambiguity about which strategies solve $q$) and vanishes when viability is certain. It depends only on $q$ and is irreducible.\\
(ii) \textit{Epistemic breadth:} $U_{\text{epi}}^{\text{breadth}}(\phi) = D_{\text{KL}}\!\left(\pi(\phi^*) \,\|\, \pi(\phi)\right)$, which measures the mismatch in approach selection compared to the ideal model--analogous to the excess rate incurred in mismatched rate-distortion when encoding under the wrong prior~\citep{lapidoth1997}. We hypothesize that reducing this requires two stages: first, \emph{diverse sampling} (breadth exploration) to increase coverage of approaches in support of $\pi(\phi^*)$; then, \emph{importance-weighted selection} among approaches to correct mismatch between $\pi(\phi)$ and $\pi(\phi^*)$. \\
(iii) \textit{Epistemic depth:}
\begin{multline*}
U_{\text{epi}}^{\text{depth}}(\phi, B) = \sum_{i \in \mathcal{V}} \pi_i(\phi^*)\, \times \\
    D_{\text{KL}}\!\bigl(\mathrm{Bern}(\theta_i(\phi^*, B)) \,\|\, \mathrm{Bern}(\theta_i(\phi, B))\bigr)
\end{multline*}
Within each viable approach $i \in \mathcal{V}$, the model succeeds with
probability $\theta_i(\phi, B)$, which we compare against the ideal model's
success rate $\theta_i(\phi^*, B)$ by treating each as a Bernoulli outcome. The
KL divergence between the two Bernoullis measures the per-approach execution
gap: how much worse the model is at \emph{carrying out} an approach it has
already chosen, as opposed to choosing it. Weighting each gap by
$\pi_i(\phi^*)$, the ideal model's mass on that approach, and summing over
viable approaches gives an aggregate measure of how much of the model's failure
is attributable to execution rather than approach selection. Our hypothesis is
that this term complements breadth exploration and can be reduced by
\emph{feedback and iterative refinement}. With this framework in place, we can
now focus on designing an optimal exploration strategy.

\textbf{An MCTS Allocation Strategy.}
Consider a budget of $N$ trajectories. We split the budget into
$N_e = \alpha N$ for \emph{exploration} (discovering distinct approaches) and
$N_f = (1-\alpha)N$ for \emph{refinement} (improving within the most promising
discovered approach $i^*$). This mirrors MCTS: $N_e$ expands the tree across
approaches while $N_f$ concentrates on high-value branches. The question is:
under what conditions does this outperform i.i.d.\ sampling?

\begin{proposition}\label{prop:mcts}
Let $c_i := -\log(1-\theta_i)$ measure the per-sample refinement value of
approach $i$, $c_{\mathrm{rand}} := -\log(1-\bar{\theta})$ that of random
sampling, and $R^{(t)} := c_{i^*}^{(t)} / c_{\mathrm{rand}}$ the quality
ratio of the best approach over the marginal. Then, assuming oracle identification the optimal approach $i^*$:

\noindent (a) $\mathrm{pass@}N_{\mathrm{mcts}} \ge
\mathrm{pass@}N_{\mathrm{rand}}$ iff
$\alpha \le 1 - 1/R^{(t)}$.

\noindent (b) Discovering $i^*$ with probability $\ge 1-\eta$ under i.i.d.\
exploration requires
$\alpha \ge \alpha_{\min}(\eta) := \frac{\log(1/\eta)}{N \log(1/(1-\pi_{i^*}))}
\approx \frac{\log(1/\eta)}{N\pi_{i^*}}$.

\noindent Both conditions are jointly satisfiable iff
$R^{(t)} \ge \frac{1}{1-\alpha_{\min}(\eta)}$.
\end{proposition}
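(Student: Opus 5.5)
The plan is to model every sample as an independent Bernoulli trial and compare failure probabilities in log form. The rates $c_i$ and $c_{\mathrm{rand}}$ are built to make this work.

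\textbf{Part (a).} Under i.i.d.\ sampling each of the $N$ trajectories succeeds independently with probability $\bar{\theta}$. Hence
$1-\mathrm{pass@}N_{\mathrm{rand}} = (1-\bar\theta)^N = e^{-N c_{\mathrm{rand}}}$.
For the MCTS allocation, I will make the simplifying assumption (to be stated explicitly) that the value of the scheme is governed by the $N_f=(1-\alpha)N$ refinement samples, drawn within the oracle-identified approach $i^*$. Each of these succeeds independently with probability $\theta_{i^*}$. The $N_e$ exploration samples only locate $i^*$, and any success among them is ignored; this is a conservative lower bound for MCTS. This gives
$1-\mathrm{pass@}N_{\mathrm{mcts}} = (1-\theta_{i^*})^{(1-\alpha)N} = e^{-(1-\alpha)N c_{i^*}}$.
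Since $x\mapsto e^{-x}$ is decreasing, $\mathrm{pass@}N_{\mathrm{mcts}}\ge \mathrm{pass@}N_{\mathrm{rand}}$ holds iff $(1-\alpha)c_{i^*}\ge c_{\mathrm{rand}}$. Because $c_{\mathrm{rand}}>0$ whenever $\bar\theta>0$, this is equivalent to $1-\alpha\ge 1/R^{(t)}$, i.e.\ $\alpha\le 1-1/R^{(t)}$. I will interpret the superscript $(t)$ as indexing the refinement iteration or budget $B$, with $\theta_i=\theta_i(\phi,B)$ held fixed within the comparison.

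\textbf{Part (b).} Under i.i.d.\ exploration, each of the $N_e=\alpha N$ samples lands in $\mathcal{A}_{i^*}$ with probability $\pi_{i^*}$. So the probability of never seeing $i^*$ is $(1-\pi_{i^*})^{\alpha N}$. Requiring this to be $\le\eta$ and taking logs gives
$\alpha N\log\frac{1}{1-\pi_{i^*}}\ge\log\frac1\eta$,
which is exactly $\alpha\ge\alpha_{\min}(\eta)$. The approximation follows from $-\log(1-\pi)=\pi+O(\pi^2)$ for small $\pi_{i^*}$.

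\textbf{Joint satisfiability.} Both conditions hold for some $\alpha\in[0,1]$ iff the interval $[\alpha_{\min}(\eta),\,1-1/R^{(t)}]$ is nonempty. That is, iff $\alpha_{\min}\le 1-1/R^{(t)}$, which rearranges to $1/R^{(t)}\le 1-\alpha_{\min}$. Assuming $\alpha_{\min}<1$ (otherwise discovery is infeasible within budget), this gives $R^{(t)}\ge 1/(1-\alpha_{\min}(\eta))$. The rearrangement uses $R^{(t)}>0$ and needs no further case split.

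\textbf{Main obstacle.} The algebra is routine; the delicate step is the modeling in part (a). I must justify that MCTS success is measured only through the $N_f$ refinement draws from $i^*$, and that those draws are independent with a constant rate $\theta_{i^*}$. I would state this as the standing assumption implicit in ``oracle identification'', noting that counting exploration successes can only help MCTS. If so, the ``iff'' in (a) should be read as the exact criterion for this conservative proxy rather than for the full scheme.
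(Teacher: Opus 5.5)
Your proposal is correct and follows the paper's proof essentially step for step. Like you, the paper counts only the $N_f=(1-\alpha)N$ independent draws from $i^*$ toward $\mathrm{pass@}N_{\mathrm{mcts}}$, compares the exponents $(1-\alpha)c_{i^*}^{(t)}$ and $c_{\mathrm{rand}}$, derives $\alpha_{\min}(\eta)$ from $(1-\pi_{i^*})^{\alpha N}\le\eta$, and gets joint feasibility from $\alpha_{\min}(\eta)\le 1-1/R^{(t)}$; your explicit caveats, that the ``iff'' in (a) is exact only for this proxy and that $\alpha_{\min}(\eta)<1$ and $R^{(t)}>0$ are needed for the final rearrangement, are left implicit in the paper but are genuinely required.
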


The proposition reveals a tension between two competing constraints. Condition
(a) imposes an upper bound on the exploration fraction: allocating too large an
$\alpha$ diverts budget from refinement, where it would be more effective.
Condition (b) imposes a lower bound: discovering rare approaches requires
sufficient exploration. The two are jointly satisfiable only when the quality
ratio $R^{(t)}$ is large enough to absorb the discovery cost. We note that this
analysis assumes all refinement concentrates on a single best approach $i^*$,
which represents the most favorable setting for depth-based strategies. Since
MCTS cannot outperform random sampling under any weaker refinement allocation
if it fails to do so here, the conditions in Proposition~\ref{prop:mcts} are
necessary for MCTS to be viable, and remain informative under this
simplification. The proof is included in Appendix \ref{app:proof_mcts}.

\subsection{On Approach Switching: "Wait..."}
\label{sec:switching}
Note that our breadth/depth framework treats an approach as committed at sampling time. In
practice, aligned reasoning models demonstrate the remarkable ability to (seemingly) sense their own uncertainty and revise mid-generation: they begin along one
approach, recognize that the derivation is unproductive, and backtrack to
another. This behavior is especially pronounced in RLVR-style models, whose
generations contain explicit self-correction (the characteristic ``wait''
before a revision). Modeling \emph{approach switching} therefore makes our
framework self-contained.

For simplicity, one could begin modelling approach switching as unconditional: at each step the model keeps its approach with probability $1-\rho$ and otherwise switches uniformly. This is
analytically clean (Appendix~\ref{app:proof_approach_switching}, Theorem~\ref{thm:unconditional_switching}) but
unrealistic, since it switches blindly rather than in response to evidence. Real
backtracking is \emph{failure-conditioned}: the model switches only after it gets an inkling that the
current attempt fails. We adopt this model. At each attempt the active approach
$i$ succeeds with probability $\theta_i$; on failure, the model keeps approach
$i$ with probability $1-\rho$ and otherwise switches uniformly to another
approach. The probability of solving within $H$ attempts (since we assume limited token budget) admits an exact form (Appendix~\ref{app:proof_approach_switching}, Theorem~\ref{thm:failure_conditioned_switching}),
but it is uninformative about \emph{when} switching helps. The first nontrivial
case is revealing. 

\begin{proposition}
\label{prop:two_step_switching_gain}
For $H=2$, the gain of failure-conditioned switching over staying with the same
approach is
\[
P^{\mathrm{bt}}_2(\rho)-P^{\mathrm{bt}}_2(0)
=
\rho
\sum_{i=1}^m
\pi_i(1-\theta_i)\left(\bar\theta_{-i}-\theta_i\right),
\]
where $\bar\theta_{-i} := \frac{1}{m-1}\sum_{j\neq i}\theta_j$ is the average
success probability of approaches other than $i$. Consequently, one-step
backtracking is beneficial iff $\sum_{i=1}^m \pi_i(1-\theta_i)(\bar\theta_{-i}-\theta_i) > 0$.
\end{proposition}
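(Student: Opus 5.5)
The plan is to compute $P^{\mathrm{bt}}_2(\rho)$ exactly by conditioning on the approach drawn at the first attempt and on its outcome. The model of failure-conditioned switching fixes the ingredients. The initial approach is $i$ with probability $\pi_i$. The first attempt succeeds with probability $\theta_i$. On failure, the second attempt uses approach $i$ again with probability $1-\rho$, or a uniformly chosen $j\neq i$ with probability $\rho/(m-1)$. We treat attempts as conditionally independent given the active approach, so a fresh attempt under $j$ succeeds with probability $\theta_j$; this is the same convention used for the exact form in Theorem~\ref{thm:failure_conditioned_switching}, which we could also simply specialize to $H=2$.

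\textbf{Step 1 (exact two-step success probability).} By total probability over $i$,
\[
P^{\mathrm{bt}}_2(\rho)=\sum_{i=1}^m \pi_i\Bigl[\theta_i+(1-\theta_i)\Bigl((1-\rho)\theta_i+\rho\,\bar\theta_{-i}\Bigr)\Bigr],
\]
where we have used $\frac{\rho}{m-1}\sum_{j\neq i}\theta_j=\rho\,\bar\theta_{-i}$. Setting $\rho=0$ gives the no-switching baseline
\[
P^{\mathrm{bt}}_2(0)=\sum_i\pi_i\bigl[\theta_i+(1-\theta_i)\theta_i\bigr],
\]
which is the probability that two independent attempts under the same approach do not both fail.

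\textbf{Step 2 (difference).} Subtracting the two expressions term by term, the $\theta_i$ terms and the $(1-\theta_i)\theta_i$ terms cancel. What remains is
\[
\sum_i\pi_i(1-\theta_i)\bigl(-\rho\theta_i+\rho\bar\theta_{-i}\bigr)=\rho\sum_i\pi_i(1-\theta_i)\bigl(\bar\theta_{-i}-\theta_i\bigr),
\]
which is the claimed identity. The gain is linear in $\rho$, with coefficient independent of $\rho$. Hence for any $\rho>0$ the gain is positive iff that coefficient is positive, which gives the "consequently" clause. For $\rho=0$ the gain is trivially zero, so the statement implicitly concerns $\rho\in(0,1]$.

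\textbf{Main obstacle.} The algebra is routine. The only delicate point is modeling: we must check that the second attempt under the same approach $i$ is an independent Bernoulli$(\theta_i)$ trial, rather than being conditioned on the first failure. If attempts within an approach were correlated, the baseline would not equal $\theta_i+(1-\theta_i)\theta_i$. I will state this independence assumption explicitly, matching the setup of Theorem~\ref{thm:failure_conditioned_switching}. I will also note that the uniform switching rule is exactly what produces $\bar\theta_{-i}$, and that $m\ge 2$ is needed for $\bar\theta_{-i}$ to be defined.
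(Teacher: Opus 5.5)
Your proof is correct and essentially the paper's: both condition on the initial approach $A_0=i$, write out the two-attempt outcome under failure-conditioned switching, and subtract the $\rho=0$ case to obtain $\rho\sum_i\pi_i(1-\theta_i)(\bar\theta_{-i}-\theta_i)$. The paper works with the probability of two consecutive failures rather than with the success probability, which makes no difference. Your observation that the ``iff'' clause requires $\rho>0$ is a small improvement, since the paper only invokes $\rho\ge 0$ there.
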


The weight $\pi_i(1-\theta_i)$ is proportional to the posterior probability that
a failed attempt came from approach $i$. Proposition~\ref{prop:two_step_switching_gain}
thus says backtracking helps when failure is evidence that the current approach
has below-average success relative to the alternatives. This captures the useful
form of RLVR-style self-correction: switching is valuable not because it blindly
increases diversity, but because failure is informative about which approach to
abandon. This is not merely a property of unprompted generation: test-time
scaling methods deliberately encourage such revision, suppressing the
end-of-thinking token and injecting a ``Wait'' token to push the model to
revisit and revise its reasoning~\citep{muennighoffYSLFH_et_al2025}, an intervention that
can also be learned rather than hand-specified~\citep{ringelTR2026}. Our
analysis predicts when such interventions help, giving a precise condition under
which forcing a revision is rational rather than wasteful. A further
specialization (Appendix~\ref{app:proof_approach_switching}, Corollary~\ref{cor:one_good_approach})
shows the threshold can exceed $1/m$, so backtracking can help even when the
model already selects the best approach more often than a uniform sampler, since
conditioning on failure shifts weight toward the weaker approaches.

\section{Experiments}
\label{sec:experiments}

We organize our empirical study in three parts. We first test the breadth/depth
prediction at \emph{inference}, measuring how base, instruct, and RLVR
checkpoints respond to depth-based refinement. We then characterize the
underlying \emph{diversity} profiles directly, confirming that the regimes our
theory relies on are measurable rather than only inferred from performance.
Finally, we ask whether the same pattern governs exploration during
\emph{training}. We do not empirically test the approach-switching model: while
elegant, identifying a genuine switch in practice is difficult, since a
``Wait'' token may signal a real change of approach or mere rumination, and
distinguishing these (along with other forms of backtracking) would require an
external LLM judge. We defer this to future work. Experiments were conducted on a combination of a single Nvidia-DGX-A100 node, as well as A100/H100/H200 nodes (2-4 GPUs) of a shared academic cluster. 

\subsection{Inference}

\begin{table*}[!t]
\centering
\small
\caption{Pass@$16$ for \textbf{Baseline}, \textbf{ENT} and \textbf{SR}. The Lift showcases the gain over the \textbf{Baseline}. 
}
\vspace{-4pt}
\begin{tabular}{@{}lccccc@{}}
\toprule
\textbf{Model} & \textbf{Baseline} & \textbf{ENT} & \textbf{SR} & \textbf{ENT Lift}  & \textbf{SR Lift}\\
 \midrule
 Qwen-3 4B base     & 0.9004 & 0.6377 & 0.6406 & -29.18\% & -28.85\% \\
 Qwen-3 4B instruct & 0.9121 & 0.9180 & 0.9170 & 0.65\% & 0.54\% \\
 Qwen-3 4B RLVR     & 0.6885 & 0.7471 & 0.7373 & 9.34\% & 7.90\% \\
 \midrule
 OLMo-3 7B base     & 0.8506 & 0.9072 & 0.8828 & 6.65\% & 3.79\% \\
 OLMo-3 7B instruct & 0.6797 & 0.8154 & 0.8086 & 19.96\% & 18.96\% \\
 OLMo-3 7B RLVR     & 0.7201 & 0.8300 & 0.7813 & 15.26\% & 8.50\% \\
 \midrule
 Nemotron-3 30B base & 0.9277 & 0.7734 & 0.8320 & -16.63\% & -10.31\% \\
 Nemotron-3 30B instruct & 0.4990 & 0.6436 & 0.6943 & 28.97\% & 39.14\% \\
\bottomrule
\end{tabular}
\label{tab:pass_at_n}
\end{table*}

We evaluate across eight model checkpoints: OLMo-3 7B~\citep{olmo2025olmo3} (the
Think branch) and Qwen-3 4B~\citep{yang2025qwen3technicalreport}, each on its
base, instruct, and RLVR checkpoints, together with Nemotron-3 30B in base and
post-trained variants. For each model and problem, we draw an i.i.d.\ pool of
$N{=}16$ rollouts $\{\tau_1,\ldots,\tau_N\} \sim p_\phi(\cdot\mid q)$ on a
1024-problem subset (out of $\sim1400$) of DeepMath~\citep{heLXLCW_et_al2025} at the hardest
difficulty (level $9$). The \textbf{Baseline} corresponds to standard i.i.d.\
sampling (e.g.\ GRPO) with a budget of $N$ rollouts. We measure relative
improvement via $\mathrm{Lift}(N) = (\mathrm{Pass@}N^{\text{method}} -
\mathrm{Pass@}N^{\text{i.i.d.}}) / \mathrm{Pass@}N^{\text{i.i.d.}}$, where
negative lift indicates degradation. Our central question is whether base and
aligned models, which we expect to occupy different diversity regimes, respond
differently to breadth and depth exploration.

\textbf{Stage 1: Breadth (Prefix Selection).} We first generate $N'$ short
prefixes ($< 256$ tokens) and subselect $N_e$ via a logprob-based diversity
criterion (see Appendix~\ref{app:prefix_selection}). This method accesses the
full top-$k$ logprob distribution and identifies promising prefixes, simulating
an oracle that selects the $N_e$ most distinct approaches, and requires no
additional model calls beyond the $N'$ generations. While there is no guarantee
that our heuristics cluster prefixes into genuinely distinct approaches, one
could bridge this gap with an external verifier for semantic clustering.

\textbf{Stage 2: Depth (Refinement).} Starting from the same $N_e$ selected
prefixes, we allocate the remaining budget $N - N_e$ to within-branch
refinement. We consider two \emph{alternative} depth strategies, which differ
only in how this refinement budget is spent; both operate under an identical
total token budget.

\noindent (i) \textbf{ENT}: A line of work uses the model's own uncertainty or
confidence as a signal, for instance output confidence and self-certainty~\citep{zhaoKFLS25, yoonYJEDLJY2025}, answer-probability rewards~\citep{yuJWYWCY_et_al2025}, or token-level entropy as a branching criterion~\citep{houHLLTD2025}. We adopt the latter~\citep{houHLLTD2025}: when a full rollout from a prefix is unsuccessful, we identify high-entropy tokens along its trajectory and branch at these points, to explore alternative continuations.

\noindent (ii) \textbf{SR}: We instead refine through self-critique \cite{madaanTGHG_et_al_2023}. When the
initial rollout from a prefix is incorrect, the model generates a short critique
of that rollout and then regenerates conditioned on it. The critique prompt is
specialized to the checkpoint type (base, instruct, or RLVR). At most one prior
incorrect rollout and its critique condition the next generation, so refinement
proceeds without accumulating context across iterations.

For both ENT and SR we fix the prefix-generation budget to $N' = 32$ and retain
$N_e = 6$ prefixes for refinement, leaving the remaining rollout budget for depth
exploration within the selected branches. We chose the prefix-selection rule
using a validation sweep on Qwen-3 4B over a disjoint held-out subset of
DeepMath, finding that the logprob-based diversity criterion in
Appendix~\ref{app:prefix_selection} gave the most reliable trade-off between
broad initial coverage and downstream refinement quality. We then hold these
hyperparameters fixed across all models and both methods.

\textbf{Results.} Table~\ref{tab:pass_at_n} confirms the prediction from our
framework. Two takeaways stand out. First, the regime governs the response:
across all three families, both depth strategies leave base models near their
baseline or degrade them, while mostly improving or matching the instruct and
RLVR variants. The degradation is most pronounced for the Qwen and Nemotron base
models, which are harmed by both strategies. Second, no single strategy is
uniformly best: entropy-based branching underperforms self-refinement on the
Nemotron post-trained checkpoint yet matches or beats it on the other aligned
checkpoints. The right depth signal thus depends on the model, but the
regime-level pattern, depth helps aligned models and not base ones, holds
throughout.

\textbf{Outtake.} We read this through the lens of our framework
(\S\ref{sec:theory}). For low-diversity aligned models, breadth is largely
saturated, so the dominant remaining uncertainty is epistemic depth; refinement
reduces precisely this, and a lightweight intrinsic signal suffices because the
model need only improve execution within already-covered approaches. For
high-diversity base models, the dominant uncertainty is instead epistemic
breadth: many viable approaches remain undiscovered, and spending the budget
refining a few prefixes forgoes the coverage breadth sampling would have
provided. A lightweight signal cannot recover this lost coverage and may compound
the error by refining an unpromising approach. Depth, in short, is beneficial
only once breadth is sufficiently covered; high-diversity models instead need a
larger exploration fraction or a stronger signal to identify which approaches are
worth refining.

Published work bears this out. TreeLLM$^*$~\citep{houHLLTD2025} and
Chain-in-Tree~\citep{li2025} apply MCTS with lightweight refinement signals to
instruct models, exactly where our framework predicts depth is effective.
Breadth-first approaches, by contrast, tend to target base
models~\citep{zhuangZGHLSZ2025, zhaoKFLS25}, which exhibit much larger
pass@$k$--pass@$1$ gaps, so even mild improvements over the parallel-sampling
baseline yield absolute gains. And when MCTS targets stronger improvements, it
typically relies on trained process reward
models~\citep{zhangZHYDDT2024, uesatoKKSSW_et_al2022} that provide dense,
per-step feedback, exactly the strong external signal our framework predicts is
necessary for high-diversity models.

\textbf{Remark.} We allow each model a budget of $7168$ tokens, with prompt tokens bringing up the max context length limit to $8192$. For the RLVR checkpoints and both Nemotron models, where we use an increased generation budget of $11264$ tokens, as these models tend to produce substantially longer rollouts; without the larger budget, valid answers are
extracted less reliably. The trends reported above are unchanged under this
setting. Despite this, Nemotron-post trained checkpoint manages to exhaust budget, which explains both its low performance for baseline and the dramatic improvement for alternate strategies.

\subsection{Measuring Diversity}
\label{sec:measuring-diversity}
In the previous section we examined diversity profiles through their
consequences for inference performance. In other words, diversity was studied
\emph{indirectly}, via its effect on pass rates. In this section we complement
that analysis with a direct response-level characterization of diversity. Rather
than measuring only differences in downstream results, we quantify the
variability, structure, and spread of the generated responses across
the different LLMs.

\begin{figure*}[!t]
\centering
\includegraphics[width=0.7\linewidth]{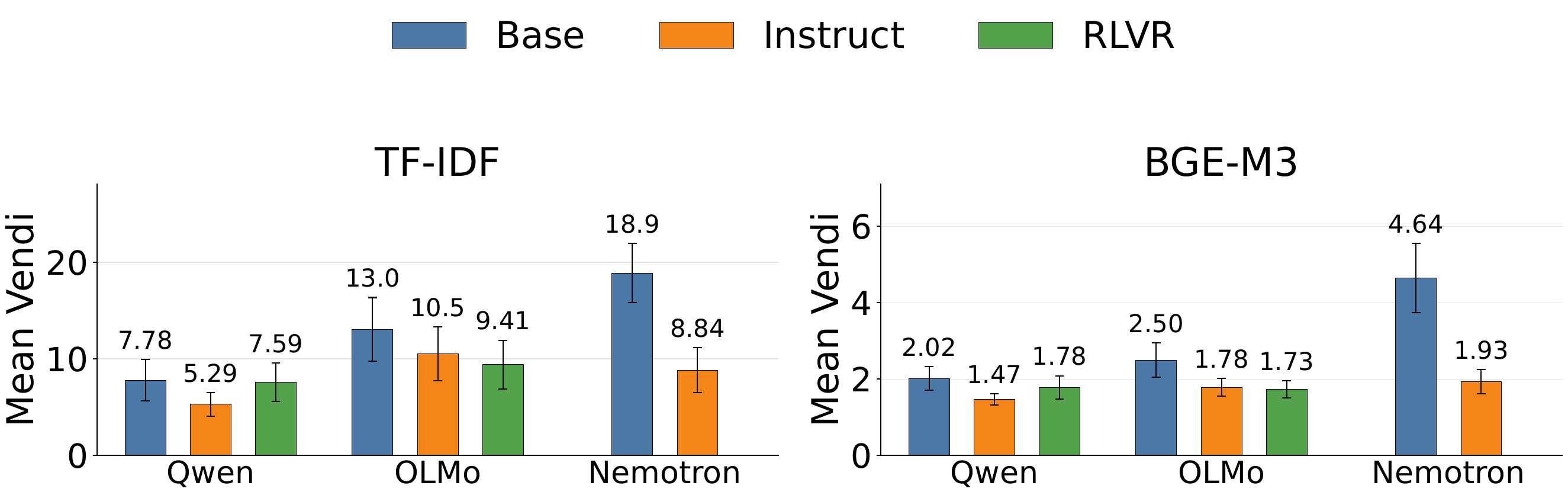}
\caption{Question-level Vendi diversity with cosine similarity. Base models
produce the largest effective response diversity, with Nemotron Base showing the
strongest contrast against its Instruct counterpart.}
\label{fig:direct-vendi}
\end{figure*}

\textbf{Diversity unit.} The key design choice is that diversity is always
measured within a fixed question. For each question $q$ and each model cell $c$,
where a cell is a model family and training variant, we collect the sampled
responses $R_{q,c}=\{r_1,\dots,r_n\}$. The sampling protocol produces up to $32$
responses per question and cell, and we analyze $26040$ traces across Qwen,
OLMo, and Nemotron. By computing every diversity statistic within a fixed
$(q,c)$ group and then averaging over questions, we avoid conflating response
diversity with differences between problem statements.

\textbf{Representing responses.} For each response $r_i$ we compute a vector
representation $x_i$, using two complementary choices. The first is a lexical
TF-IDF representation with unigram and bigram features, which is sensitive to
surface-form differences, templates, and local phrasing. The second is a dense
BGE-M3~\citep{chen2024bgem3} embedding of the reasoning trace, which captures
semantic similarity between responses. Both representations are normalized
before computing similarities.

\textbf{Vendi score.} Our primary diversity metric is the Vendi
score~\citep{friedman2023vendiscore}, which measures the effective number of
distinct response modes within a set of samples. For a fixed question and cell,
we form the $n\times n$ similarity matrix $K_{ij}=\mathrm{sim}(x_i,x_j)$. We use
cosine similarity in the main text and report an RBF kernel ($\gamma=0.5$) in
Appendix~\ref{app:clustering}. The Vendi score is the exponentiated entropy of
the normalized eigenvalues of $K/n$: a value near one means the responses are
nearly identical, while a larger value means they occupy
several distinct directions. As an alternative view, we measure the
geometric spread of responses directly via a cluster-radius statistic; this
gives consistent trends and is deferred to Appendix~\ref{app:clustering}.

\noindent Across these analyses, Base models exhibit the broadest and least
concentrated response distributions, while Instruct models are more compressed.
RLVR is not uniform across families: it increases diversity relative to Qwen
Instruct but stays close to or slightly below OLMo Instruct. The geometric
cluster-spread analysis in Appendix~\ref{app:clustering} agrees with every one
of these trends.

\textbf{Results.} Figure~\ref{fig:direct-vendi} shows that Base models generally
produce the broadest response distributions. The pattern is clearest in the
dense representation, where every Base checkpoint has a markedly higher Vendi
score than its Instruct counterpart, and the TF--IDF representation gives the
same qualitative ordering at a larger lexical scale. The Base--Instruct gap is
widest for Nemotron and narrowest for Qwen, but holds in every family. RLVR
behaves differently across families: for Qwen it partially restores diversity
relative to Instruct, while for OLMo it stays close to the Instruct condition.


\begin{figure*}[!t]
\centering
\includegraphics[width=0.8\linewidth]{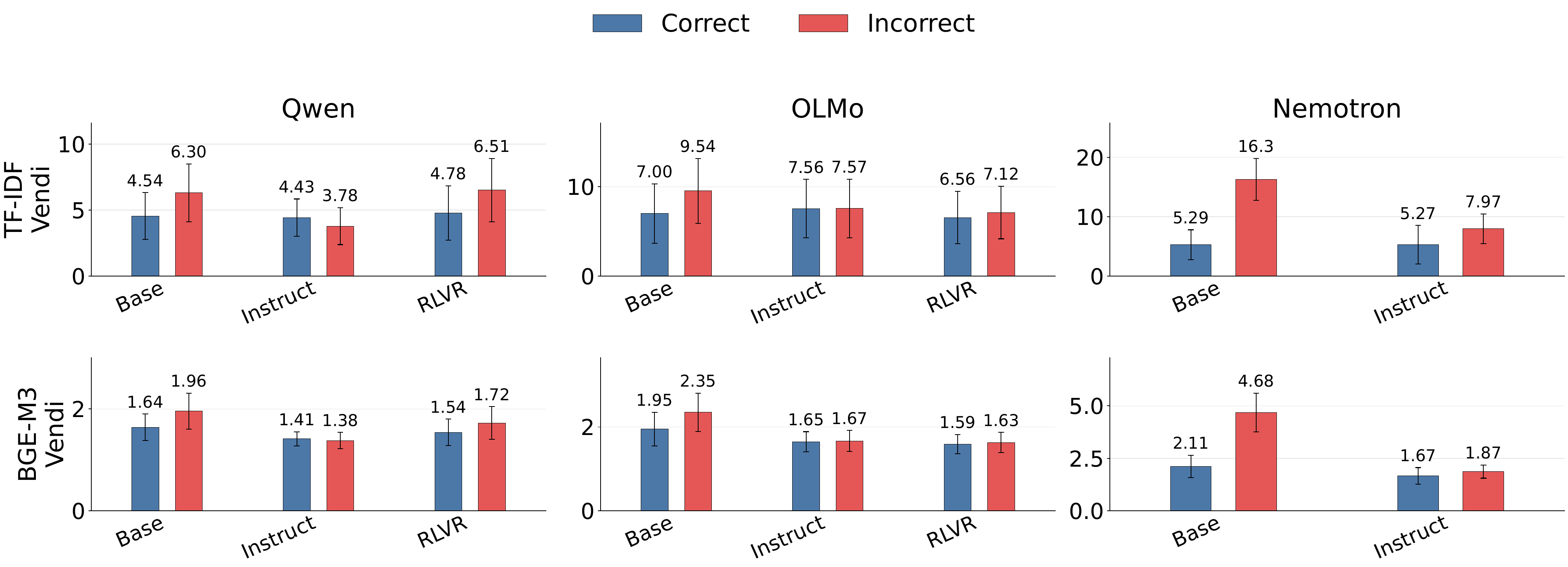}
\caption{Correctness-conditioned Vendi score. Each bar splits responses within a
question and model cell into correct and incorrect subsets, computes Vendi
separately for each, and averages over questions.}
\label{fig:correctness-vendi}
\end{figure*}


\textbf{Conditioning on correctness.} High diversity is not necessarily
desirable if it is driven by unstable or incorrect reasoning. We therefore
recompute Vendi after splitting $R_{q,c}$ into correct and incorrect subsets
using response-level correctness labels, building a separate similarity matrix
per subset whenever at least two responses are available and averaging over
questions. Figure~\ref{fig:correctness-vendi} shows that incorrect traces drive
much of the observed diversity, especially for Base models, whose incorrect
subsets are notably more diverse than correct ones. The gap is largest for Nemotron Base and present but milder for Qwen and OLMo Base. Instruct models are tighter and show little correctness gap, with some cases nearly unchanged or even slightly lower on incorrect traces. Thus even though some diversity is productive, it can indicate heterogeneous failure modes.

\textbf{Outtake.} Two findings emerge. First, the direct diversity analysis
agrees with the inference results: both the lexical (TF--IDF) and semantic
(BGE-M3) measures recover the same Base--Instruct ordering that the pass-rate
trends imply, so the diversity regimes are not just performance-centric but a property of the response distributions themselves. Second, the correctness-conditioned analysis is more revealing. Base models are consistently
more diverse on incorrect traces than on correct ones, implying a detectable signal of uncertainty. This is precisely why intrinsic-signal methods such as INTUITOR and PACR~\citep{zhaoKFLS25, yoonYJEDLJY2025} work well on base models: the model's own uncertainty is informative about where it is going wrong. For Instruct models the correctness gap is small and sometimes reversed, so the same uncertainty signal is weak or absent, perhaps hinting towards limited universality of the corresponding intrinsic-signal approaches.

\subsection{Training}
\label{sec:training}

We now ask whether the same diversity-dependent pattern governs exploration
\emph{during} training, where the policy is updated rather than held fixed. We
train Qwen-3 1.7B~\citep{yang2025qwen3technicalreport} from both its base and
post-trained checkpoints, comparing a standard GSPO~\citep{zhengLLCY_et_al2025}
baseline against a variant that uses our MCTS-entropy exploration (ENT) for
rollout collection. Both use the same difficulty-$9$ DeepMath subset of $1024$
problems with MCTS hyperparameters unchanged from \S\ref{sec:experiments}. We
hold out an additional $128$ problems for validation, evaluate greedily ($n{=}1$,
\texttt{do\_sample=False}), and report the best validation accuracy during
training.

\begin{table}[!t]
\centering
\small
\caption{Best validation accuracy during training for the GSPO baseline and the
MCTS-entropy (ENT) variant, on Qwen-3 1.7B base and post-trained checkpoints.
Starting validation accuracy is shown in brackets. Lift is the relative gain
over the corresponding baseline.}
\vspace{-4pt}
\begin{tabular}{@{}lccc@{}}
\toprule
\textbf{Model} & \textbf{Baseline} & \textbf{ENT} & \textbf{Lift}\\
\midrule
Qwen-3 1.7B base ($0.10$)     & 0.27 & 0.16 & -40.7\% \\
Qwen-3 1.7B instruct ($0.37$)  & 0.61 & 0.59 & -3.2\% \\
\bottomrule
\end{tabular}
\label{tab:training}
\end{table}

\textbf{Results.} The training results roughly reproduce the Qwen-3 4B inference
trends: MCTS exploration keeps the post-trained checkpoint within the baseline
ballpark while clearly degrading the base checkpoint, the same regime-dependent
direction. Two factors limit how much this comparison can bear. Under
group-based advantage estimation, the learning signal depends on within-group
reward variance, not raw success, so the highest-reward policy need not yield the
best training signal~\citep{xuSFK2026}; we therefore also report the critic
reward scorer over training (Fig.~\ref{fig:critic_rewards} in the Appendix), arguably a cleaner indicator of exploration, which shows the expected trend. Note that we also did not tune MCTS for training, reusing the Qwen-3 4B inference hyperparameters, so the post-trained checkpoint can plausibly match/exceed the baseline with training-specific tuning.

\section{Discussion}
    \label{sec:discussion}

We do not claim diversity is the only factor that matters. The regime is not
static: we characterize it on a fixed checkpoint, but the policy evolves during
training and the regime may shift with it. It is also a downstream property,
most directly of pretraining data, so what we measure as a model's regime is
inseparable from its data history, and two models at the same nominal training
stage may differ. Dataset composition, prompt design, and scale may likewise
confound the effects we attribute to diversity; in our own experiments, prompt
variation substantially altered the effectiveness of MCTS.

These caveats motivate a recommendation rather than a fixed recipe. We do not
expect every paper to characterize diversity as comprehensively as we do here.
But the alternative now common in the literature, demonstrating a strategy's
reasoning gains on one or two models, is too weak to establish that the strategy
generalizes. At minimum, a proposed exploration strategy should be shown to
improve pass@$k$ inference performance across several models spanning different
diversity regimes, rather than downstream performance on a single checkpoint.
This is a low bar relative to full diversity characterization, and it would
already expose the regime-dependence that single-model evaluation hides.

\section*{Limitations \& Societal Impact}

Our framework assumes a discrete set of latent approaches, an idealization of a
continuous trajectory space, and our prefix-based method assumes that early
tokens commit to an approach, which need not hold universally. The
refinement model assumes each refinement step yields the same expected
improvement, whereas in practice successive refinements likely show diminishing
returns; modeling this decay is left to future work. Empirically, our
experiments cover reasoning on a single benchmark, while diversity
profiles may differ across domains. Our depth strategies rely on lightweight
intrinsic signals; we do not test stronger external signals such as PRMs, which our analysis suggests high-diversity models may require. 

Our work studies inference- and training-time exploration in LLMs, with a
corresponding rise in energy use and carbon cost. We view diversity-aware
strategy selection partly as a mitigation, since matching the strategy to the
model's regime avoids spending this compute where it does not help, but the
methods themselves remain more expensive than standard sampling. More broadly,
techniques that strengthen LLM reasoning inherit the dual-use risks of more
capable models, and gains on benchmark mathematics need not transfer to faithful
or trustworthy reasoning in deployment.

\section*{Acknowledgments}

This research used both the DeltaAI advanced computing and data resource, which is supported by the National Science Foundation (award OAC 2320345) and the State of Illinois, and the Delta advanced computing and data resource which is supported by the National Science Foundation (award OAC 2005572) and the State of Illinois.. Delta and DeltaAI are joint efforts of the University of Illinois Urbana-Champaign and its National Center for Supercomputing Applications.

\bibliography{references}

\appendix

\begin{table*}[t]
\centering
\normalsize
\begin{tabular}{@{}
>{\raggedright\arraybackslash$}p{0.30\linewidth}<{$}
>{\raggedright\arraybackslash}p{\dimexpr0.70\linewidth-2\tabcolsep\relax}
@{}}
\toprule
\textbf{Symbol} & \textbf{Meaning} \\
\midrule

\multicolumn{2}{@{}l}{\textbf{Prefix-selection setting}}\\
\addlinespace[2pt]

N, K, L & Total rollout budget, number of selected rollouts, and prefix length. \\
x^{(i)}_{1:L} & Length-$L$ prefix of rollout $i$. \\
\ell^{(i)}_t,\, m^{(i)}_t & Chosen-token log-probability and top-1/top-2 log-probability gap at position $t$. \\
q(i),\, \hat q(i) & Prefix score and its normalized version. \\
U_t,\, \mathrm{disagree}(t) & Number of distinct tokens and disagreement score at position $t$. \\
n,\, \mathcal{T} & Number and set of anchor positions. \\
S_i & Token set of prefix $x^{(i)}_{1:L}$. \\
\mathrm{sim}_{\mathrm{Jac}}(i,j),\, \mathrm{dist}_{\mathrm{Jac}}(i,j) & Jaccard similarity and distance between prefixes $i$ and $j$. \\
\lambda,\, \alpha & Quality-diversity tradeoff and mixing weight. \\
d(i,j),\, d_{\mathrm{broad}}(i,j),\, d_{\mathrm{deep}}(i,j) & Hybrid, broad, and deep distances. \\
\mathbf{p}^{(i)}_t & Probability vector at anchor position $t$ for rollout $i$. \\
\mathrm{JSD}(\cdot,\cdot),\, D_{\mathrm{KL}}(\cdot\|\cdot) & Jensen--Shannon and Kullback--Leibler divergences. \\
w_t(i,j) & Anchor weight at position $t$. \\
\omega(s) & Step-dependent quality weight. \\

\addlinespace[4pt]
\multicolumn{2}{@{}l}{\textbf{Proof notation for Proposition~\ref{prop:mcts}}}\\
\addlinespace[2pt]

\tau_1,\ldots,\tau_N & Sampled rollouts/trajectories. \\
p_\phi(\cdot \mid q) & Model distribution conditioned on query $q$. \\
\bar{\theta} & Marginal success probability of one sampled rollout. \\
\mathcal{A}_i,\, \pi_i,\, \theta_i & Approach class, its probability, and its success probability. \\
\alpha,\, N_e,\, N_f & Exploration fraction, exploration budget, and refinement budget. \\
i^\star,\, \theta_{i^\star}^{(t)} & Best discovered approach and its stage-$t$ success probability. \\
c_{\mathrm{rand}},\, c_{i^\star}^{(t)},\, R^{(t)} & Random-sampling exponent, refinement exponent, and relative improvement ratio. \\
\eta,\, \alpha_{\min}(\eta) & Failure tolerance and minimum exploration fraction for discovery. \\
\mathrm{pass@}N_{\mathrm{rand}},\, \mathrm{pass@}N_{\mathrm{mcts}} & Success probabilities of random sampling and the MCTS-style strategy. \\

\bottomrule
\end{tabular}
\caption{Notation}
\label{tab:appendix_notation}
\normalsize
\end{table*}

\section{Notation}

The exhaustive list of notations is provided in Table~\ref{tab:appendix_notation}.

\section{Prefix Selection Methods}\label{app:prefix_selection}

\paragraph{Setting.}
For each problem, we sample $N$ independent rollouts from the model and retain only the first $L$ tokens (the \emph{prefix}) of each rollout. A selection method receives only these $N$ prefixes and must choose a subset of size $K \ll N$. Methods never use ground-truth correctness; labels are used only to evaluate $\mathrm{pass@}K$ (whether at least one selected rollout is correct).

\paragraph{Prefix features.}
For rollout $i\in[N]$, let $x^{(i)}_{1:L}$ denote its prefix tokens.
We use: (i) chosen-token log-probabilities $\ell^{(i)}_t := \log p(x^{(i)}_t \mid x^{(i)}_{<t})$, (ii) top-$k$ logprob vectors at each position (when available), and (iii) the margin $m^{(i)}_t := \log p(\text{top-1 at }t) - \log p(\text{top-2 at }t)$.

\paragraph{Quality score.}
We define the prefix sequence log-probability and its normalised form
\begin{multline}    
q(i) := \sum_{t=1}^{L} \ell^{(i)}_t, \\
\qquad
\hat q(i) := \frac{q(i)-\min_j q(j)}{\max_j q(j)-\min_j q(j)} \in [0,1].
\label{eq:quality_norm_app}
\end{multline}
We use $\hat q(i)$ when mixing quality with diversity.

\paragraph{Token-disagreement anchors.}
Let $U_t := \bigl|\{x^{(i)}_t : i\in[N]\}\bigr|$ be the number of distinct tokens observed at position $t$ across the $N$ prefixes. Define
\begin{equation}
\mathrm{disagree}(t) := \frac{U_t - 1}{N - 1} \in [0,1].
\label{eq:disagree_app}
\end{equation}
Given an anchor budget $n$, we select the $n$ positions with the highest $\mathrm{disagree}(t)$ and denote the set by $\mathcal{T}$.

\paragraph{Jaccard similarity on prefix tokens.}
For rollout $i$, let $S_i := \{x^{(i)}_1,\dots,x^{(i)}_L\}$ be the set of tokens appearing in its prefix. Define
\begin{multline}
\mathrm{sim}_{\mathrm{Jac}}(i,j) := \frac{|S_i \cap S_j|}{|S_i \cup S_j|},\\
\qquad
\mathrm{dist}_{\mathrm{Jac}}(i,j) := 1-\mathrm{sim}_{\mathrm{Jac}}(i,j).
\label{eq:jaccard_app}
\end{multline}

\subsection{Random@$K$ (baseline)}\label{app:random}
\paragraph{Method.}
Select $K$ rollouts uniformly at random from the $N$ samples. This provides the null baseline for whether prefix information is useful.

\subsection{MMR (Maximal Marginal Relevance)}\label{app:mmr}
MMR greedily balances selecting \emph{high-quality} prefixes while avoiding redundancy with already-selected prefixes.

\paragraph{Greedy rule.}
Let $S$ be the selected set (initially empty) and $R$ the remaining indices.
At each step, choose
\begin{equation}
i^* \in \arg\max_{i\in R}\;\Bigl[
\lambda \,\hat q(i)\;-\;(1-\lambda)\,\max_{j\in S}\mathrm{sim}_{\mathrm{Jac}}(i,j)
\Bigr],
\label{eq:mmr_app}
\end{equation}
add $i^*$ to $S$, and repeat until $|S|=K$.
(When $S$ is empty, the similarity term is taken as $0$.)

\paragraph{Key hyperparameter: $\lambda$ (quality--diversity tradeoff).}
The parameter $\lambda\in[0,1]$ controls how strongly the method prioritises high-probability prefixes versus novelty:
\begin{itemize}
    \item $\lambda \approx 1$: \emph{quality-dominant}. The method approaches selecting the top-$K$ prefixes by $\hat q(i)$, using diversity only as a weak tie-breaker.
    \item $\lambda \approx 0$: \emph{diversity-dominant}. The method prioritises reducing redundancy, even if it must accept lower-probability prefixes.
\end{itemize}
In our sweeps, smaller $\lambda$ typically helps in high-diversity regimes (where many rollouts follow the same incorrect approach), while larger $\lambda$ can help in low-diversity regimes (where the dominant approach is more often viable).

\subsection{Adaptive Dispersion (token-disagreement anchors)}\label{app:adisp}
Adaptive Dispersion is a logprob-aware diversity selection method designed to distinguish \emph{true approach forks} from superficial variation. It has three components:
(1) find anchor positions where rollouts actually diverge,
(2) define a hybrid distance that combines distributional differences at anchors with broad token-level differences,
and (3) greedily select a set that transitions from quality to diversity as it fills the $K$ slots.

\subsubsection{Step 1: choose anchor positions}
Compute disagreement scores \eqref{eq:disagree_app} and select the top $n$ positions:
\begin{equation}
\mathcal{T} := \text{top-}n \text{ positions by } \mathrm{disagree}(t).
\label{eq:anchors_app}
\end{equation}

\paragraph{Key hyperparameter: $n$ (number of anchors).}
The integer $n$ controls how many positions are treated as decision points.
\begin{itemize}
    \item Small $n$: anchors focus on the most prominent early forks; this is robust but may miss secondary divergences.
    \item Large $n$: anchors capture finer-grained branching structure but can include noisy positions, which makes distance estimates less stable.
\end{itemize}
A useful rule of thumb is that $n$ should be small relative to $L$ and also not so large that most prefixes become unique ``by chance'' at the anchor positions.

\subsubsection{Step 2: hybrid distance}
Define a distance between prefixes $i$ and $j$ as
\begin{equation}
d(i,j) := \alpha \, d_{\mathrm{deep}}(i,j) + (1-\alpha)\, d_{\mathrm{broad}}(i,j),
\label{eq:hybrid_dist_app}
\end{equation}
with $\alpha\in[0,1]$.

\paragraph{Broad distance.}
We use Jaccard distance:
\begin{equation}
d_{\mathrm{broad}}(i,j) := \mathrm{dist}_{\mathrm{Jac}}(i,j).
\label{eq:dbroad_app}
\end{equation}

\paragraph{Deep distance (distributional at anchors).}
At each anchor position $t\in\mathcal{T}$, let $\mathbf{p}^{(i)}_t$ be the probability vector obtained by applying $\mathrm{softmax}$ to the stored top-$k$ logprob vector for rollout $i$ at position $t$ (renormalised over the top-$k$ support). We measure distributional disagreement via Jensen--Shannon divergence:
\begin{multline}
\mathrm{JSD}\!\left(\mathbf{p},\mathbf{r}\right)
:= \frac{1}{2} D_{\mathrm{KL}}\!\left(\mathbf{p}\,\|\,\mathbf{m}\right)
+ \frac{1}{2} D_{\mathrm{KL}}\!\left(\mathbf{r}\,\|\,\mathbf{m}\right), \\
\qquad \mathbf{m}=\frac{\mathbf{p}+\mathbf{r}}{2}.
\label{eq:jsd_app}
\end{multline}
We weight anchors by inverse margin:
\begin{equation}
w_t(i,j) := \frac{1}{1 + \frac{1}{2}\bigl(m^{(i)}_t + m^{(j)}_t\bigr)}.
\label{eq:weight_app}
\end{equation}
Then
\begin{equation}
d_{\mathrm{deep}}(i,j)
:= \frac{\sum_{t\in\mathcal{T}} w_t(i,j)\,\mathrm{JSD}\!\left(\mathbf{p}^{(i)}_t,\mathbf{p}^{(j)}_t\right)}
{\sum_{t\in\mathcal{T}} w_t(i,j)}.
\label{eq:ddeep_app}
\end{equation}

\paragraph{Key hyperparameter: $\alpha$ (deep vs.\ broad distance).}
The mixing parameter $\alpha\in[0,1]$ trades off:
\begin{itemize}
    \item $\alpha \approx 1$: distance is dominated by distributional differences at anchors (captures subtle ``state-of-mind'' differences even when sampled tokens match).
    \item $\alpha \approx 0$: distance reduces toward token-level Jaccard over the whole prefix (captures broad stylistic/lexical differences).
\end{itemize}
Empirically, larger $\alpha$ tends to help when anchor positions reliably correspond to genuine approach forks; smaller $\alpha$ can be more stable when token-level differences are spread across many positions.

\subsubsection{Step 3: greedy max--min selection with an adaptive schedule}
Adaptive Dispersion builds $S$ greedily using a max--min diversity term together with a step-dependent quality weight.
Let $s\in\{0,1,\dots,K-1\}$ be the selection step. Define a linear schedule
\begin{multline}
\omega(s) := \omega_{\mathrm{init}}\Bigl(1-\frac{s}{K-1}\Bigr) + \omega_{\mathrm{final}}\frac{s}{K-1},\\
0\le \omega_{\mathrm{final}}\le \omega_{\mathrm{init}}\le 1.
\label{eq:schedule_app}
\end{multline}
At step $s$, select
\begin{equation}
i^* \in \arg\max_{i\in R}\;\Bigl[
\omega(s)\,\hat q(i) + \bigl(1-\omega(s)\bigr)\,\min_{j\in S} d(i,j)
\Bigr],
\label{eq:adisp_score_app}
\end{equation}
add $i^*$ to $S$, and repeat.
When $S$ is empty, the diversity term is omitted and we pick the highest-quality prefix.

\paragraph{Key hyperparameters: $\omega_{\mathrm{init}}, \omega_{\mathrm{final}}$ (quality schedule).}
The schedule in \eqref{eq:schedule_app} controls how quickly the method transitions from \emph{exploitation} to \emph{diversification}:
\begin{itemize}
    \item Larger $\omega_{\mathrm{init}}$: the first few picks prioritise a strong ``anchor'' prefix (high $\hat q$).
    \item Smaller $\omega_{\mathrm{final}}$: later picks become close to pure diversity via the max--min term $\min_{j\in S} d(i,j)$.
\end{itemize}
A wide gap $\omega_{\mathrm{init}}-\omega_{\mathrm{final}}$ yields a stronger shift toward diversity as the set fills; a narrow gap keeps the method relatively quality-focused throughout.

\paragraph{Interaction with $K$.}
Because $\omega(s)$ depends on $K$, the same $(\omega_{\mathrm{init}},\omega_{\mathrm{final}})$ can behave differently for different $K$ values. For small $K$, the schedule has fewer steps and thus less opportunity to ``cool down'' from quality to diversity; for larger $K$, the later selections are more purely diversity-driven.

\subsection{Method taxonomy (summary table)}\label{app:method_taxonomy}
Table~\ref{tab:method_taxonomy} summarises the methods used in the main text by the signals they rely on.

\begin{table*}[h]
\centering
\small
\caption{Taxonomy of prefix-selection methods used in the main text. ``Top-$k$'' indicates whether the method requires per-position top-$k$ logprob vectors (beyond chosen-token logprobs).}
\label{tab:method_taxonomy}
\begin{tabular}{@{}lccc@{}}
\toprule
\textbf{Method} & \textbf{Quality signal} & \textbf{Diversity signal} & \textbf{Uses top-$k$?} \\
\midrule
Random@$K$ & none & none & No \\
MMR & prefix logprob $\hat q(i)$ & Jaccard over prefix tokens & No \\
Adaptive Dispersion & prefix logprob $\hat q(i)$ & token-disagreement anchors + hybrid distance & Yes \\
\bottomrule
\end{tabular}
\end{table*}

\subsection{Hyperparameter sweeps (values and reporting)}\label{app:hyperparam_sweeps}
We sweep small grids and report the best-performing variant per model and $K$ in Table~1 of the main text.

\paragraph{MMR.}
We sweep $\lambda$ over a small set (e.g., $\{0.3,0.5,0.7\}$). Lower values emphasise novelty, while higher values emphasise selecting high-logprob prefixes first.

\paragraph{Adaptive Dispersion.}
We sweep:
\begin{itemize}
    \item $n$ (anchors): a small set such as $\{6,8,10\}$,
    \item $\alpha$ (deep weight): a small set such as $\{0.3,0.5,0.7\}$,
    \item $(\omega_{\mathrm{init}},\omega_{\mathrm{final}})$ (schedule endpoints): a small set such as $(0.7,0.1)$ and $(0.5,0.1)$.
\end{itemize}
Across these variants, the method spans from \emph{quality-first} (large $\omega_{\mathrm{init}}$) to \emph{diversity-first} (small $\omega_{\mathrm{init}}$), and from \emph{anchor-heavy} (large $\alpha$) to \emph{token-overlap-heavy} (small $\alpha$).

\section{Proof of Proposition~\ref{prop:mcts}}\label{app:proof_mcts}

\begin{proof}
We prove the ratio condition and the feasibility statement.

Under i.i.d.\ sampling, $\tau_1,\ldots,\tau_N \overset{\mathrm{i.i.d.}}{\sim} p_\phi(\cdot\mid q)$ and each draw
succeeds with marginal probability $\bar{\theta}:=\Pr(\text{success})$. Hence
\begin{multline}
\mathrm{pass@}N_{\mathrm{rand}} = 1-(1-\bar{\theta})^N
=1-\exp\!\big(-N c_{\mathrm{rand}}\big),\\
c_{\mathrm{rand}}:=-\log(1-\bar{\theta}).
\end{multline}
If we further partition trajectory space into approaches $\{\mathcal{A}_i\}_{i=1}^m$ and define
$\pi_i:=\Pr(\tau\in\mathcal{A}_i)$ and $\theta_i:=\Pr(\text{success}\mid \tau\in\mathcal{A}_i)$, then by the
law of total probability $\bar{\theta}=\sum_{i=1}^m \pi_i\theta_i$.

Let $\alpha:=N_e/N$ so $N_f=(1-\alpha)N$. Under the oracle identification assumption, exploitation is
restricted to $i^*$ and consists of $N_f$ independent attempts, each succeeding with probability
$\theta_{i^*}^{(t)}$. Therefore,
\begin{align*}
\mathrm{pass@}N_{\mathrm{mcts}}
=1-(1-\theta_{i^*}^{(t)})^{N_f}
\\=1-\exp\!\big(-N_f c_{i^*}^{(t)}\big),\\
c_{i^*}^{(t)}:=-\log(1-\theta_{i^*}^{(t)}).
\end{align*}
Thus $\mathrm{pass@}N_{\mathrm{mcts}}\ge \mathrm{pass@}N_{\mathrm{rand}}$ is equivalent to
\begin{multline}
\exp\!\big(-(1-\alpha)N c_{i^*}^{(t)}\big)\le \exp\!\big(-N c_{\mathrm{rand}}\big)
\iff \\
(1-\alpha)c_{i^*}^{(t)} \ge c_{\mathrm{rand}}
\iff
\alpha \le 1-\frac{c_{\mathrm{rand}}}{c_{i^*}^{(t)}}.
\end{multline}
With $R^{(t)}:=c_{i^*}^{(t)}/c_{\mathrm{rand}}$, this becomes $\alpha \le 1-1/R^{(t)}$.

During exploration, each of the $N_e=\alpha N$ samples lands in approach $i^*$ with probability $\pi_{i^*}$,
so
\[
P(\mathrm{discover}\ i^*) = 1-(1-\pi_{i^*})^{N_e} = 1-(1-\pi_{i^*})^{\alpha N}.
\]
To ensure $P(\mathrm{discover}\ i^*)\ge 1-\eta$, it suffices that
$(1-\pi_{i^*})^{\alpha N}\le \eta$, i.e.,
\[
\alpha \ge \frac{\log(1/\eta)}{N\,\log\!\big(\frac{1}{1-\pi_{i^*}}\big)}=:\alpha_{\min}(\eta),
\]
and using $\log\!\big(\frac{1}{1-x}\big)\approx x$ for small $x$ gives
$\alpha_{\min}(\eta)\approx \frac{\log(1/\eta)}{N\pi_{i^*}}$.
Finally, an $\alpha$ satisfying both discovery and outperformance exists iff
$\alpha_{\min}(\eta)\le 1-\frac{1}{R^{(t)}}$, equivalently $R^{(t)}\ge \frac{1}{1-\alpha_{\min}(\eta)}$.
\end{proof}
\section{Approach Switching: Statements and Proofs}
\label{app:proof_approach_switching}

 \paragraph{Setup.} Fix a problem $q$, model $\phi$, and budget $B$; we suppress the dependence on $(\phi,B)$ and write $\pi_i$ and $\theta_i$. There are $m$
latent approaches. The active approach evolves as a Markov chain over $[m]$ with
switching matrix $S_\rho \in \mathbb{R}^{m\times m}$ given by
$(S_\rho)_{ij} = 1-\rho$ for $i=j$ and $\rho/(m-1)$ otherwise. We write
$\bar\theta_u := \frac{1}{m}\sum_i \theta_i$ for the uniform-approach success
rate and $\bar\theta_\pi := \sum_i \pi_i \theta_i$ for the no-switching rate. Throughout this section, all distributions over approaches are treated as
row vectors. Let $\mathbf{1}\in\mathbb{R}^m$ be the all-ones column vector and
let
\[
    u^\top := \frac{1}{m}\mathbf{1}^\top
\]
denote the uniform row distribution over approaches.

\subsection{Proof of Theorem~\ref{thm:unconditional_switching}}

\begin{theorem}[Unconditional switching]
\label{thm:unconditional_switching}
Let $A_t \in [m]$ be the active approach at step $t$ with $A_0 \sim \pi$, evolving
by $S_\rho$ for $H$ steps. Let $\lambda_\rho = 1 - \frac{m}{m-1}\rho$. The success
probability after $H$ switching opportunities is
\[
P_H(\rho) = \bar\theta_u + \lambda_\rho^H(\bar\theta_\pi - \bar\theta_u).
\]
If $0 < \rho \le (m-1)/m$, then $P_H(\rho)$ interpolates between $\bar\theta_\pi$
and $\bar\theta_u$, and for any $H \ge 1$, switching improves over no switching
iff $\bar\theta_u > \bar\theta_\pi$.
\end{theorem}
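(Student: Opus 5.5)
The plan is a spectral decomposition of the switching matrix $S_\rho$. We interpret the success probability after $H$ switching opportunities as $P_H(\rho)=\pi^\top S_\rho^H\theta$. Here $\pi^\top$ is the initial row distribution, $\theta=(\theta_1,\dots,\theta_m)^\top$ is a column vector, and the attempt is made with the approach active at step $H$. So the first step is to write
\[
S_\rho=(1-\rho)I+\tfrac{\rho}{m-1}\bigl(\mathbf{1}\mathbf{1}^\top-I\bigr)=\lambda_\rho I+(1-\lambda_\rho)\,\mathbf{1}u^\top ,
\]
which we check entrywise. The diagonal is $\lambda_\rho+(1-\lambda_\rho)/m=1-\rho$, because $(1-\lambda_\rho)/m=\rho/(m-1)$. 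The off-diagonal entries are $(1-\lambda_\rho)/m=\rho/(m-1)$.

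Since $\mathbf{1}u^\top$ is idempotent, it is the projection $J$ onto the stationary (uniform) component, and $I-J$ is the complementary projection. Writing $S_\rho=J+\lambda_\rho(I-J)$, with the two projections orthogonal to each other, gives
\[
S_\rho^H=J+\lambda_\rho^H(I-J).
\]
Then
\[
\pi^\top S_\rho^H\theta=\pi^\top J\theta+\lambda_\rho^H\bigl(\pi^\top\theta-\pi^\top J\theta\bigr).
\]
We have $\pi^\top J\theta=(\pi^\top\mathbf{1})(u^\top\theta)=\bar\theta_u$, since $\pi$ sums to one, and $\pi^\top\theta=\bar\theta_\pi$. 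This yields the closed form in the theorem directly.

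For the interpolation claim, the condition $0<\rho\le (m-1)/m$ gives $\lambda_\rho\in[0,1)$. Hence $\lambda_\rho^H\in[0,1)$ for $H\ge1$ (with the convention $0^H=0$), and $P_H(\rho)$ is a convex combination $\lambda_\rho^H\bar\theta_\pi+(1-\lambda_\rho^H)\bar\theta_u$. It equals $\bar\theta_\pi$ at $\rho=0$ and decays monotonically in $H$ toward $\bar\theta_u$. The comparison with no switching follows from
\[
P_H(\rho)-P_H(0)=(1-\lambda_\rho^H)(\bar\theta_u-\bar\theta_\pi).
\]
The factor $1-\lambda_\rho^H$ is strictly positive under the stated range, so the difference is positive iff $\bar\theta_u>\bar\theta_\pi$.

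The only delicate point is modelling rather than algebra: we have to fix that ``success after $H$ opportunities'' means the success probability of the approach active at time $H$, i.e.\ a single terminal attempt. This is in contrast with the failure-conditioned model of Theorem~\ref{thm:failure_conditioned_switching}, where attempts accumulate. I would state this interpretation explicitly before the computation. I would also note that the endpoint $\rho=(m-1)/m$ gives $\lambda_\rho=0$, which resets to uniform in one step, and that this is consistent with the formula.
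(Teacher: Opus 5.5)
Your proposal is correct and matches the paper's proof essentially step for step: the decomposition $S_\rho=\lambda_\rho I+(1-\lambda_\rho)\mathbf{1}u^\top$, idempotence of the rank-one projection giving $S_\rho^H=\mathbf{1}u^\top+\lambda_\rho^H(I-\mathbf{1}u^\top)$, and the comparison via $(1-\lambda_\rho^H)(\bar\theta_u-\bar\theta_\pi)$ with $\lambda_\rho\in[0,1)$. The paper uses the same single-terminal-attempt quantity $P_H(\rho)=\pi^\top S_\rho^H\theta$ that you define, so making this interpretation explicit is a useful clarification rather than a deviation.
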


\begin{proof}
Let $S_\rho\in\mathbb{R}^{m\times m}$ be the transition matrix over latent
approaches, with
\[
(S_\rho)_{ij}
=
\begin{cases}
1-\rho, & i=j,\\
\rho/(m-1), & i\neq j .
\end{cases}
\]
Each row of $S_\rho$ sums to one. Each column also sums to one, since
\[
    (1-\rho)+(m-1)\frac{\rho}{m-1}=1 .
\]
Thus $S_\rho$ is doubly stochastic and the uniform distribution $u$ is
stationary.

Define
\[
    \lambda_\rho := 1-\frac{m}{m-1}\rho ,
    \qquad
    P := \mathbf{1}u^\top .
\]
The matrix $P$ has every row equal to $u^\top$ and is a projection:
$P^2=P$. Moreover, $IP=PI=P$. We first show that
\[
    S_\rho = \lambda_\rho I+(1-\lambda_\rho)P .
\]
Indeed, for $i\neq j$,
\[
    \bigl[\lambda_\rho I+(1-\lambda_\rho)P\bigr]_{ij}
    =
    \frac{1-\lambda_\rho}{m}
    =
    \frac{\rho}{m-1},
\]
while for $i=j$,
\[
\begin{aligned}
    \lambda_\rho+\frac{1-\lambda_\rho}{m}
    &=
    1-\frac{m\rho}{m-1}
    +
    \frac{\rho}{m-1}  \\
    &= 1-\rho .
\end{aligned}
\]
Therefore the identity holds.

Since $P$ is a projection and commutes with $I$, the power of $S_\rho$ is
\[
    S_\rho^H
    =
    P+\lambda_\rho^H(I-P).
\]
Equivalently, for any initial row distribution $\pi^\top$,
\[
\begin{aligned}
    \pi^\top S_\rho^H
    &=
    \pi^\top P+\lambda_\rho^H\pi^\top(I-P) \\
    &=
    u^\top+\lambda_\rho^H(\pi^\top-u^\top).
\end{aligned}
\]
Let $\theta=(\theta_1,\ldots,\theta_m)^\top$. The success probability after
$H$ switching opportunities is
\[
    P_H(\rho)=\pi^\top S_\rho^H\theta .
\]
Substituting the previous display gives
\[
\begin{aligned}
    P_H(\rho)
    &=
    u^\top\theta
    +
    \lambda_\rho^H(\pi^\top-u^\top)\theta  \\
    &=
    \bar\theta_{\mathrm{unif}}
    +
    \lambda_\rho^H
    \left(
        \bar\theta_\pi-\bar\theta_{\mathrm{unif}}
    \right).
\end{aligned}
\]
This proves the claimed closed form.

It remains to compare with no switching. When $\rho=0$,
$\lambda_\rho=1$, so $P_H(0)=\bar\theta_\pi$. If
$0<\rho\le (m-1)/m$, then $\lambda_\rho\in[0,1)$, and for any $H\ge 1$,
\[
\begin{aligned}
    P_H(\rho)-P_H(0)
    &=
    \Bigl(1-\lambda_\rho^H\Bigr)
    \left(
        \bar\theta_{\mathrm{unif}}-\bar\theta_\pi
    \right).
\end{aligned}
\]
Since $1-\lambda_\rho^H>0$, switching improves success if and only if
$\bar\theta_{\mathrm{unif}}>\bar\theta_\pi$.
\end{proof}

\subsection{Proof of Theorem~\ref{thm:failure_conditioned_switching}}

\begin{theorem}[Failure-conditioned switching]
\label{thm:failure_conditioned_switching}
Suppose the model switches only after a failed attempt: at attempt $t$ with
active approach $i$, the attempt succeeds with probability $\theta_i$, and on
failure the approach updates by $S_\rho$. Let $F = \operatorname{diag}(1-\theta_1,
\ldots, 1-\theta_m)$. The probability of solving within $H$ attempts is
\[
P^{\mathrm{bt}}_H(\rho) = 1 - \pi^\top (F S_\rho)^H \mathbf{1}.
\]
\end{theorem}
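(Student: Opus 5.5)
The plan is to track the joint law of ``still unsolved'' and ``current active approach'' as a sub-probability row vector, and to show by induction that it evolves by right-multiplication by $F S_\rho$. Concretely, define for $t=0,1,\dots,H$ the row vector $v_t^\top$ with entries
\[
(v_t)_j := \Pr\bigl(\text{no success in attempts }1,\dots,t,\ A_t = j\bigr),
\]
where $A_t$ is the approach that will be active at attempt $t+1$. The base case is $v_0^\top = \pi^\top$, since $A_0\sim\pi$ and nothing has been attempted.

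\textbf{Inductive step.} Condition on the event that there is no success through attempt $t$ and $A_t=i$. The model then behaves as follows:
\begin{itemize}
\item attempt $t+1$ fails with probability $1-\theta_i$, independently of the past given $i$;
\item on failure, the next approach is drawn from row $i$ of $S_\rho$, and this draw is independent of the failure event given $i$. This is exactly the Markov structure of failure-conditioned switching.
\end{itemize}
Hence
\[
(v_{t+1})_j=\sum_i (v_t)_i(1-\theta_i)(S_\rho)_{ij},
\]
that is, $v_{t+1}^\top = v_t^\top F S_\rho$. Iterating gives $v_H^\top=\pi^\top (FS_\rho)^H$.

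\textbf{Conclusion.} Summing over $j$ marginalizes out the final approach label. This gives $\Pr(\text{no success in }H\text{ attempts})=v_H^\top\mathbf 1=\pi^\top (FS_\rho)^H\mathbf 1$. Taking complements yields $P^{\mathrm{bt}}_H(\rho)=1-\pi^\top (FS_\rho)^H\mathbf 1$.

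One remark on the final switch. The switch after the $H$-th failure is harmless because $S_\rho$ is row-stochastic, so $S_\rho\mathbf 1=\mathbf 1$. Equivalently, $\pi^\top (FS_\rho)^H\mathbf 1=\pi^\top (FS_\rho)^{H-1}F\mathbf 1$, so the formula does not depend on whether a switch is applied after the last attempt.

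\textbf{Main obstacle.} The only delicate point is bookkeeping rather than computation: the state must be defined so that the factor $F$ (conditioning on failure) is applied before $S_\rho$ (switching). With the ordering $FS_\rho$ and $v_t$ indexed by the approach that is active for the next attempt, the recursion is exact. As a sanity check, at $\rho=0$ the formula reduces to $1-\sum_i\pi_i(1-\theta_i)^H$. Expanding the case $H=2$ then gives Proposition~\ref{prop:two_step_switching_gain} directly.
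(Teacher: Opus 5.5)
Your proof is correct and matches the paper's argument: the paper tracks the same unresolved-mass row vector $r_t^\top$, starting from $r_0^\top=\pi^\top$, derives $r_{t+1}^\top=r_t^\top F S_\rho$, iterates, and takes the complement of $r_H^\top\mathbf{1}$. Your remark that the switch after the $H$-th failure is immaterial, because $S_\rho\mathbf{1}=\mathbf{1}$, is a useful clarification that the paper leaves implicit.
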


\begin{proof}
Let $r_t^\top\in\mathbb{R}^m$ denote the row vector whose $i$th entry is the
probability that, after $t$ failed attempts, the trajectory is still unresolved
and the active approach is $i$. Initially,
\[
    r_0^\top=\pi^\top .
\]
Given active approach $i$, an attempt fails with probability $1-\theta_i$.
Thus, after the failure event but before switching, the unresolved mass vector
is $r_t^\top F$, where
\[
    F=\operatorname{diag}(1-\theta_1,\ldots,1-\theta_m).
\]
Conditional on failure, the next active approach is then updated according to
$S_\rho$. Hence
\[
    r_{t+1}^\top = r_t^\top F S_\rho .
\]
By induction,
\[
    r_H^\top = \pi^\top (F S_\rho)^H .
\]
The probability that all $H$ attempts fail is the total unresolved mass after
$H$ failed attempts:
\[
    \Pr(\text{fail for all }H\text{ attempts})
    =
    r_H^\top \mathbf{1}
    =
    \pi^\top(FS_\rho)^H\mathbf{1}.
\]
Taking the complement yields
\[
    P_H^{\mathrm{bt}}(\rho)
    =
    1-\pi^\top(FS_\rho)^H\mathbf{1}.
\]
This proves the theorem.
\end{proof}

\subsection{Proof of Proposition~\ref{prop:two_step_switching_gain}}

\begin{proof}
For $H=2$, condition on the initial approach $i$. The first attempt fails with
probability $1-\theta_i$. After this failure, the second attempt uses approach
$i$ with probability $1-\rho$ and uses each $j\neq i$ with probability
$\rho/(m-1)$. Therefore, the conditional probability of two consecutive
failures given $A_0=i$ is
\[
\begin{aligned}
    &(1-\theta_i)
    \Bigg[
        (1-\rho)(1-\theta_i) \\
    &\qquad\qquad
        +
        \frac{\rho}{m-1}
        \sum_{j\neq i}(1-\theta_j)
    \Bigg].
\end{aligned}
\]
Averaging over $A_0\sim\pi$ gives
\[
\begin{aligned}
    1-P_2^{\mathrm{bt}}(\rho)
    =
    \sum_{i=1}^m
    \pi_i(1-\theta_i)
    \Bigg[
        (1-\rho)(1-\theta_i) \\
        +
        \frac{\rho}{m-1}
        \sum_{j\neq i}(1-\theta_j)
    \Bigg].
\end{aligned}
\]
When $\rho=0$, the model stays with the same approach after failure, so
\[
    1-P_2^{\mathrm{bt}}(0)
    =
    \sum_{i=1}^m
    \pi_i(1-\theta_i)^2 .
\]
Subtracting the two success probabilities gives
\[
\begin{aligned}
&P_2^{\mathrm{bt}}(\rho)-P_2^{\mathrm{bt}}(0) \\
&=
\sum_{i=1}^m
\pi_i(1-\theta_i)
\Bigg[
    (1-\theta_i)
    -
    (1-\rho)(1-\theta_i) \\
&\qquad\qquad\qquad
    -
    \frac{\rho}{m-1}
    \sum_{j\neq i}(1-\theta_j)
\Bigg] \\
&=
\rho
\sum_{i=1}^m
\pi_i(1-\theta_i)
\Bigg[
    (1-\theta_i)\\
    &\qquad\qquad\qquad-
    \frac{1}{m-1}
    \sum_{j\neq i}(1-\theta_j)
\Bigg].
\end{aligned}
\]
Now define
\[
    \bar\theta_{-i}
    :=
    \frac{1}{m-1}\sum_{j\neq i}\theta_j .
\]
Then
\[
\begin{aligned}
    &(1-\theta_i)
    -
    \frac{1}{m-1}
    \sum_{j\neq i}(1-\theta_j) \\
    &\qquad =
    \frac{1}{m-1}\sum_{j\neq i}\theta_j-\theta_i
    =
    \bar\theta_{-i}-\theta_i .
\end{aligned}
\]
Substituting this identity yields
\[
    P_2^{\mathrm{bt}}(\rho)-P_2^{\mathrm{bt}}(0)
    =
    \rho
    \sum_{i=1}^m
    \pi_i(1-\theta_i)
    (\bar\theta_{-i}-\theta_i).
\]
Because $\rho\ge 0$, the gain is positive if and only if the displayed sum is
positive.
\end{proof}

\subsection{Proof of Corollary~\ref{cor:one_good_approach}}

\begin{corollary}
\label{cor:one_good_approach}
Suppose one approach $g$ has success probability $\theta_{\mathrm{hi}}$ and all
others have $\theta_{\mathrm{lo}} < \theta_{\mathrm{hi}}$, and let $p = \pi_g$.
Then one-step failure-conditioned switching improves success iff
\[
p < \frac{1-\theta_{\mathrm{lo}}}{(1-\theta_{\mathrm{lo}}) + (m-1)(1-\theta_{\mathrm{hi}})}.
\]
\end{corollary}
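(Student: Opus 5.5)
The plan is to specialize Proposition~\ref{prop:two_step_switching_gain} to this two-level configuration. That result says one-step backtracking helps iff $G := \sum_{i} \pi_i(1-\theta_i)(\bar\theta_{-i}-\theta_i) > 0$. So the task is to evaluate $G$ explicitly and reduce $G>0$ to the stated inequality in $p$.

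First, compute $\bar\theta_{-i}$ for each type of approach.
\begin{itemize}
\item For $i=g$, every other approach has success $\theta_{\mathrm{lo}}$. Hence $\bar\theta_{-g}-\theta_g = \theta_{\mathrm{lo}}-\theta_{\mathrm{hi}} =: -\Delta$, where $\Delta := \theta_{\mathrm{hi}}-\theta_{\mathrm{lo}} > 0$.
\item For $i\neq g$, the other $m-1$ approaches consist of $g$ and $m-2$ low approaches. Hence $\bar\theta_{-i} = \frac{\theta_{\mathrm{hi}}+(m-2)\theta_{\mathrm{lo}}}{m-1}$, and $\bar\theta_{-i}-\theta_i = \frac{\Delta}{m-1}$.
\end{itemize}

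Next, the low approaches carry total mass $\sum_{i\neq g}\pi_i = 1-p$. Only this total matters, because their summands are identical. Substituting,
\[
G = -p(1-\theta_{\mathrm{hi}})\Delta + (1-p)(1-\theta_{\mathrm{lo}})\frac{\Delta}{m-1}.
\]

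Then, dividing by $\Delta>0$ and multiplying by $m-1>0$ preserves the sign. So $G>0$ iff
\[
(1-p)(1-\theta_{\mathrm{lo}}) > (m-1)p(1-\theta_{\mathrm{hi}}),
\]
that is, iff $p\bigl[(1-\theta_{\mathrm{lo}})+(m-1)(1-\theta_{\mathrm{hi}})\bigr] < 1-\theta_{\mathrm{lo}}$. Dividing by the bracket, which is positive because $\theta_{\mathrm{lo}}<\theta_{\mathrm{hi}}\le 1$ forces $1-\theta_{\mathrm{lo}}>0$, yields the claimed threshold.

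The computation is routine. The only step needing care is the sign bookkeeping: the dependence on $\pi$ must collapse to $p$ and $1-p$, and the positivity of $\Delta$, of $m-1$ (assume $m\ge 2$), and of the bracket must be checked before dividing.

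I also note a tacit assumption: "improves" requires $\rho>0$. By Proposition~\ref{prop:two_step_switching_gain} the gain equals $\rho G$, so its sign matches the sign of $G$ exactly when $\rho>0$.
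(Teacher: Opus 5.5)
Your proposal is correct and matches the paper's proof essentially line for line. Both specialize Proposition~\ref{prop:two_step_switching_gain}, obtain $-\Delta$ for $g$ and $\Delta/(m-1)$ for each weak approach, collapse the weights to $p$ and $1-p$, divide by $\Delta>0$, and rearrange. Your explicit checks that $\rho>0$, $m\ge 2$, and the bracket is positive before dividing are correct; the paper leaves these implicit, and its proof of the proposition even says ``because $\rho\ge 0$'', where $\rho>0$ is what is needed.
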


\begin{proof}
Let $\Delta:=\theta_{\mathrm{hi}}-\theta_{\mathrm{lo}}>0$. For the good
approach $g$,
\[
    \bar\theta_{-g}-\theta_g
    =
    \theta_{\mathrm{lo}}-\theta_{\mathrm{hi}}
    =
    -\Delta .
\]
For any weak approach $i\neq g$,
\[
\begin{aligned}
    \bar\theta_{-i}-\theta_i
    &=
    \frac{\theta_{\mathrm{hi}}+(m-2)\theta_{\mathrm{lo}}}{m-1}
    -
    \theta_{\mathrm{lo}}  \\
    &=
    \frac{\Delta}{m-1}.
\end{aligned}
\]
Using Proposition~\ref{prop:two_step_switching_gain}, switching improves
success if and only if
\[
    -p(1-\theta_{\mathrm{hi}})\Delta
    +
    (1-p)(1-\theta_{\mathrm{lo}})
    \frac{\Delta}{m-1}
    >0 .
\]
Since $\Delta>0$, this is equivalent to
\[
    (1-p)(1-\theta_{\mathrm{lo}})
    >
    p(m-1)(1-\theta_{\mathrm{hi}}).
\]
Rearranging,
\[
\begin{aligned}
    1-\theta_{\mathrm{lo}}
    &>
    p\Bigl[
        1-\theta_{\mathrm{lo}}
        +(m-1)(1-\theta_{\mathrm{hi}})
    \Bigr],
\end{aligned}
\]
and therefore
\[
    p
    <
    \frac{1-\theta_{\mathrm{lo}}}
    {(1-\theta_{\mathrm{lo}})
    +(m-1)(1-\theta_{\mathrm{hi}})} .
\]
This proves the corollary.
\end{proof}

\section{Diversity: Cluster-Spread Analysis}
\label{app:clustering}
As an alternative to the Vendi score, we measure the geometric spread of
responses directly. For each question and cell, we treat the responses as an
assumed cluster, compute the centroid $\mu_{q,c}=\frac{1}{n}\sum_{i=1}^{n}x_i$,
and report the root-mean-squared radius
$\mathrm{RMS}=\bigl(\frac{1}{n}\sum_{i=1}^{n}\|x_i-\mu_{q,c}\|_2^2\bigr)^{1/2}$,
averaged over questions. This is not a learned clustering procedure: the
clusters are defined by the cell itself. We also report the RBF-kernel Vendi
score ($\gamma=0.5$) referenced in the main text.

\begin{figure*}[!t]
\centering
\includegraphics[width=0.7\linewidth]{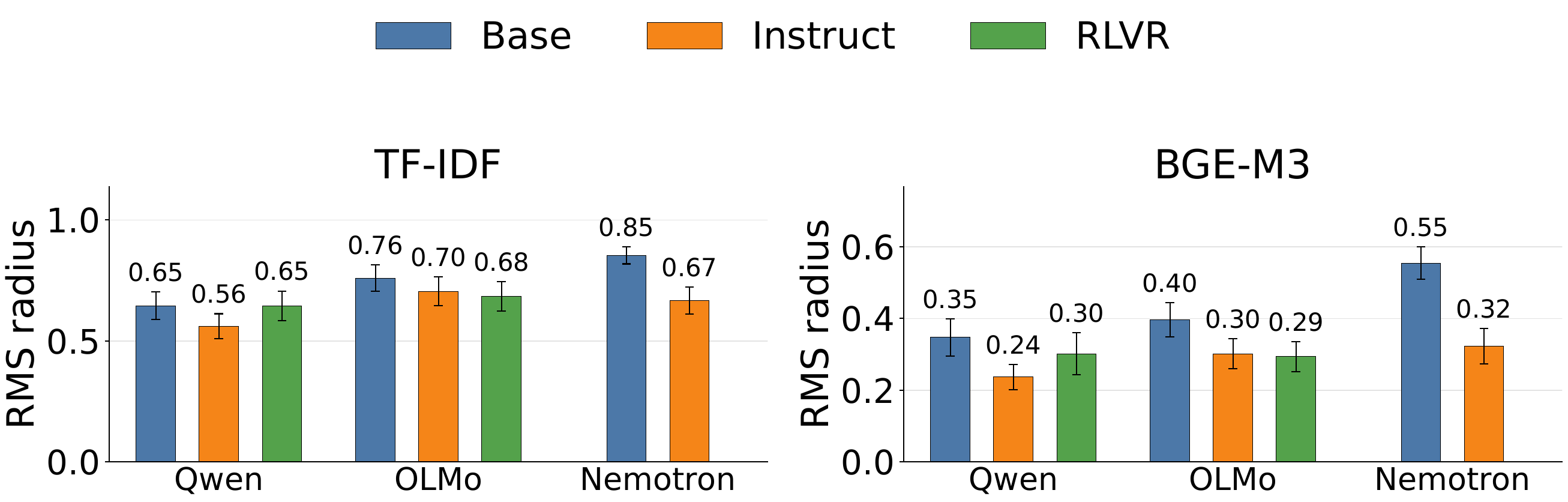}
\caption{Assumed-cluster spread. Base cells form wider response clusters than
Instruct cells, showing that instruction tuning compresses both mode diversity
and geometric spread.}
\label{fig:cluster-rms}
\end{figure*}

The cluster-spread analysis in Figure~\ref{fig:cluster-rms} supports the same
conclusion as the Vendi analysis: Base models are consistently the widest
clusters. In BGE-M3 space the RMS radius is $0.35$ for Qwen Base versus $0.24$
for Qwen Instruct, $0.40$ versus $0.30$ for OLMo, and $0.56$ versus $0.32$ for
Nemotron, with TF--IDF showing the same ordering at a lexical scale. Pairwise
cosine statistics agree: Nemotron Base has the lowest BGE-M3 mean pairwise
cosine ($0.68$) while Nemotron Instruct is substantially tighter ($0.89$). Thus
instruction tuning reduces both the number of effective response modes and the
geometric spread of sampled traces.

\begin{figure*}[!t]
\centering
\includegraphics[width=0.9\linewidth]{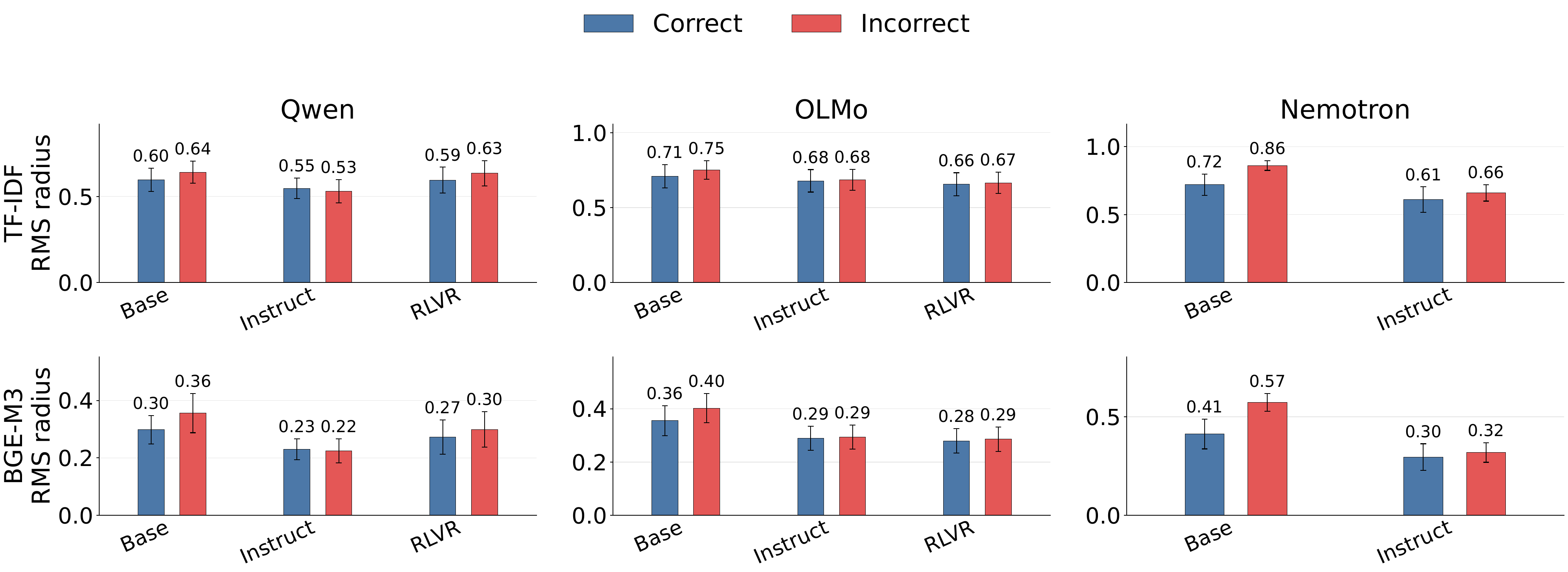}
\caption{Correctness-conditioned clustering. Incorrect responses are typically
more spatially dispersed than correct responses, especially for Base models.}
\label{fig:correctness-rms}
\end{figure*}

We also condition the cluster spread on correctness, treating correct and
incorrect responses as two predefined clusters whenever at least two responses
are available in a subset. Figure~\ref{fig:correctness-rms} shows that incorrect
responses are more spread out than correct ones, especially for Base models. In
BGE-M3 space, Nemotron Base has RMS radius $0.41$ on correct traces and $0.57$
on incorrect ones; Qwen Base increases from $0.30$ to $0.36$ and OLMo Base from
$0.36$ to $0.40$. TF--IDF shows the same direction (Nemotron Base $0.72$ to
$0.86$, OLMo Base $0.71$ to $0.75$, Qwen Base $0.60$ to $0.64$). Instruct and
RLVR cells are much more stable across the correctness split, with differences
near zero for Qwen Instruct, OLMo Instruct, and OLMo RLVR. Together with the
correctness-conditioned Vendi scores, this indicates that in the
highest-diversity cells, incorrect traces are not only more mode-diverse but
also occupy a wider region of representation space.

\section{Training Reward Tracking}

\begin{figure}[!t]
\centering
\includegraphics[width=\linewidth]{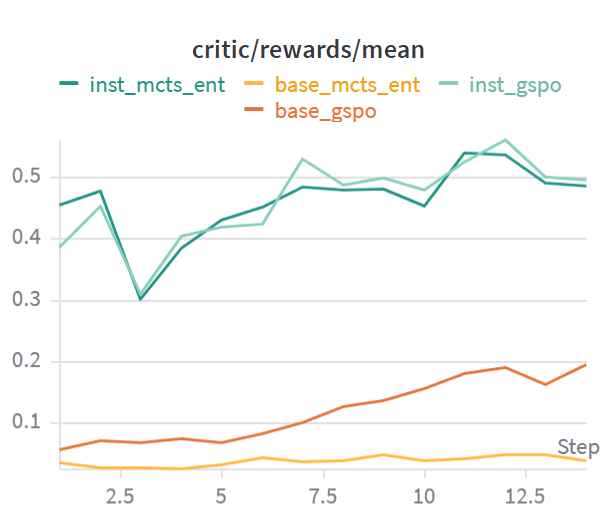}
\caption{Critic rewards offer a direct signal on quality of exploration; the plot verifies the expected trend on Qwen family i.e. the performance of Entropy-based MCTS closely matches standard sampling for the instruct model, while significantly degrades on the base model.}
\label{fig:critic_rewards}
\end{figure}

\section{Direct Diversity Analysis Details}
\label{app:direct-diversity}

\paragraph{Data and cells.}
The clean response set contains 260,040 traces after preprocessing. The analyzed cells are Qwen Base, Qwen Instruct, Qwen RLVR, OLMo Base, OLMo Instruct, OLMo RLVR, Nemotron Base, and Nemotron Instruct. Nemotron RLVR is not present in the data and is therefore omitted from all figures. The number of unique cleaned question texts is 1023 for Qwen cells, 1024 for OLMo cells, and 1023 for Nemotron Base and Instruct. Most question-cell groups contain 32 responses; a small number contain fewer responses because of missing generations or trace filtering.

\paragraph{Trace preprocessing.}
Each rollout file contains a problem, answer metadata, a list of sampled responses, and response-level correctness labels. We flatten these files into one row per response trace. The flattened row stores the cleaned question, cleaned response, compact response text, model family, variant, run name, question id, and correctness label. Responses shorter than the minimum cleaned length are removed. The diversity analyses use the compact response field so that the metrics focus on the generated reasoning trace rather than irrelevant serialization artifacts.

\paragraph{TF--IDF representation.}
The lexical representation uses a word-level TF--IDF vectorizer with unigram and bigram features, sublinear term frequency, minimum document frequency filtering, and $\ell_2$ normalization. The representation is fitted once on all cleaned response traces in the analysis set, after which per-question similarity matrices are formed by selecting the rows belonging to each question-cell group. This representation intentionally preserves surface-level differences in phrasing and reasoning templates.

\paragraph{Dense representation.}
The semantic representation uses BGE-M3 dense embeddings. Embeddings are computed for the compact response strings and normalized before similarity computation. The dense representation is less sensitive to exact wording than TF--IDF, so it gives a complementary view of whether responses differ semantically rather than merely lexically. In practice, BGE-M3 scores are lower than TF--IDF scores because many lexically distinct traces are embedded close together semantically.

\paragraph{Similarity kernels.}
The main text reports cosine similarity. For each question-cell group with response vectors $x_1,\ldots,x_n$, the cosine similarity matrix is
\[
K_{ij}=\max(0,\cos(x_i,x_j)),
\]
with $K_{ii}=1$. We also compute an RBF kernel based on cosine distance:
\[
d^2_{ij}=2-2K_{ij}, \qquad
K^{\mathrm{RBF}}_{ij}=\exp(-0.5d^2_{ij}).
\]
The RBF kernel compresses the absolute Vendi scale but preserves the same interpretation: larger values indicate a more diverse set of responses for the same question.

\paragraph{Vendi implementation.}
For every question-cell group, we compute eigenvalues of $K/n$, clip small numerical negatives to zero, normalize them into $p_i$, and report $\exp(-\sum_i p_i\log p_i)$. This score can be read as an effective number of response modes. We do not compute a single Vendi score over all responses pooled across questions; instead, we compute one score per question-cell group and average over questions.

\paragraph{Correctness-conditioned implementation.}
For correctness-conditioned Vendi and RMS radius, we split responses within each question-cell group into correct and incorrect subsets. If a subset contains at least two responses, we build its own similarity matrix for Vendi and its own centroid for RMS radius. The final correct and incorrect values for a model cell are means over the questions for which the corresponding subset is available.

\paragraph{Cluster-spread implementation.}
For cluster spread, the response vectors for a question-cell group define one assumed cluster. We compute the centroid and the RMS radius around that centroid. We also compute mean pairwise cosine similarity over off-diagonal response pairs. A larger RMS radius indicates greater geometric dispersion; a larger mean pairwise cosine indicates a tighter response set. These metrics are again averaged over questions.

\paragraph{Base--Instruct separation.}
For families with both Base and Instruct variants, we also compare the two assumed clusters question by question. We compute the cosine distance between the Base and Instruct centroids and normalize the squared centroid distance by the pooled within-cluster variance. We also compute a silhouette-style score using cosine distance and the predefined Base/Instruct labels. These diagnostics characterize whether instruction tuning shifts the center of the response distribution, not only its spread.

\begin{figure*}[t]
\centering
\includegraphics[width=0.92\linewidth]{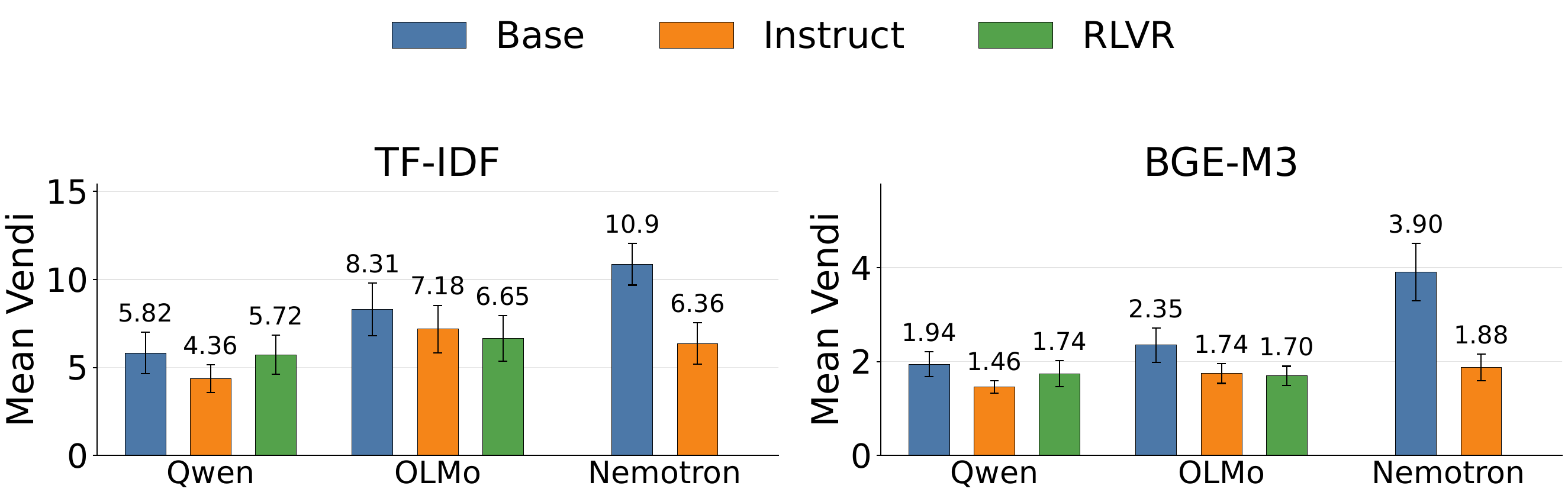}
\caption{The RBF kernel compresses the Vendi scale but preserves the main ordering: Base cells remain the most diverse.}
\label{fig:app-vendi-rbf}
\end{figure*}

\begin{figure*}[t]
\centering
\includegraphics[width=0.92\linewidth]{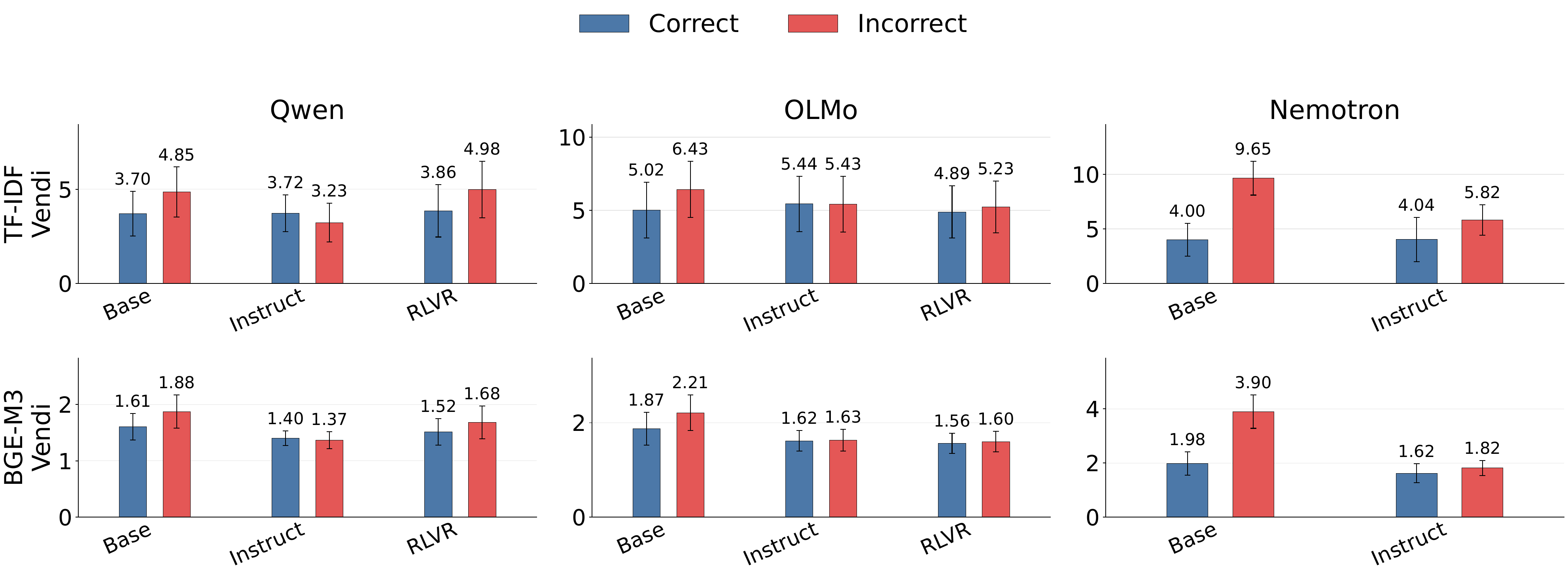}
\caption{Under the RBF kernel, incorrect traces still account for the largest correctness-conditioned diversity gaps.}
\label{fig:app-correctness-vendi-rbf}
\end{figure*}

\begin{figure*}[t]
\centering
\includegraphics[width=0.92\linewidth]{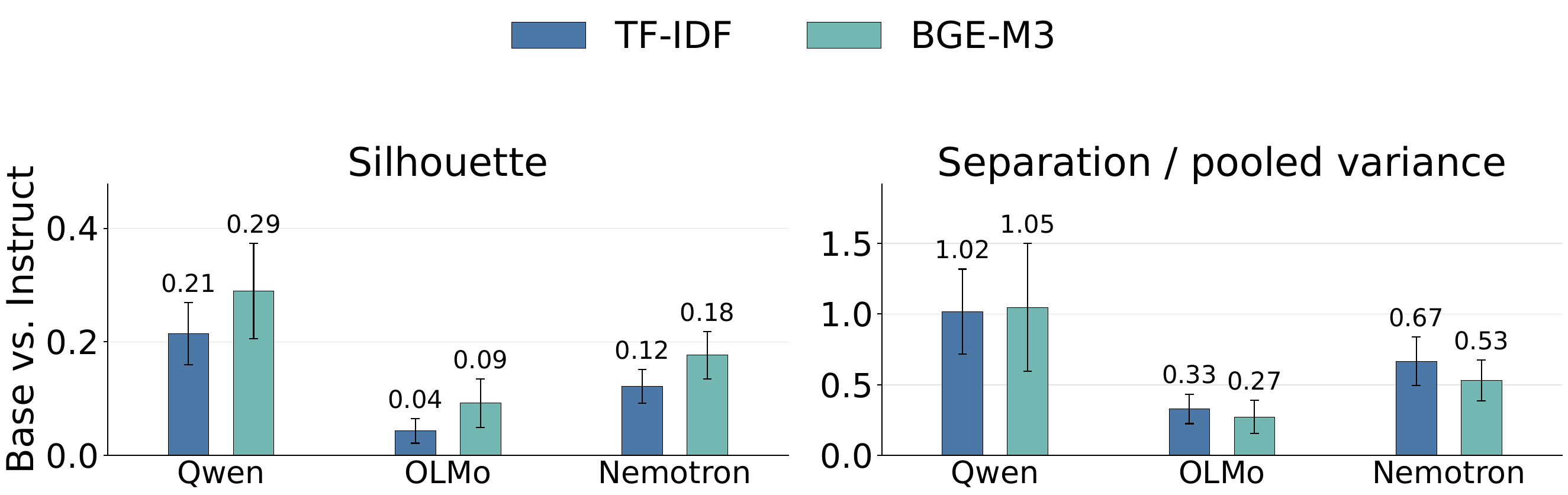}
\caption{Qwen shows the clearest Base--Instruct geometric separation, OLMo the weakest, and Nemotron an intermediate but still visible shift.}
\label{fig:app-base-instruct-separation}
\end{figure*}

\end{document}